\newtheorem{proposition}{Proposition}
\newtheorem{lemma}[proposition]{Lemma}
\newtheorem{theorem}[proposition]{Theorem}
\newtheorem{definition}[proposition]{Definition}
\newtheorem{remark}[proposition]{Remark}
\newtheorem{example}{Example}
\newtheorem{assumption}{Assumption}
\newenvironment{proof}[1][Proof\ ]{\medskip\noindent{\bf #1}\ }{%
\hfill $\Box$\par\quad\par}
\def\mcl#1{\mathcal{#1}}
\def\bracket#1{\left\langle #1\right\rangle}
\def\hil{\mcl{H}}
\def\nn{\nonumber}
\def\opn{\operatorname}
\def\mr{\mathrm}
\def\alg{\mcl{A}}
\def\bbracket#1{\big\langle #1\big\rangle}
\def\Bbracket#1{\bigg\langle #1\bigg\rangle}
\def\sbracket#1{\langle #1\rangle}
\def\bg{\mathbf{g}}
\def\red#1{\textcolor{black}{#1}}
\def\eqref#1{(\ref{#1})}
\newenvironment{mythm}[1][]{\medskip\par\noindent{\bfseries #1}\ \,\,\em}{\medskip\par}
\DeclareSymbolFont{EulerExtension}{U}{euex}{m}{n}
\DeclareMathSymbol{\euintop}{\mathop} {EulerExtension}{"52}
\DeclareMathSymbol{\euointop}{\mathop} {EulerExtension}{"48}
\title{Why High-rank Neural Networks Generalize?: An Algebraic Framework with RKHSs}
\author{Yuka Hashimoto$^{1,2}$\quad Sho Sonoda$^{2,3}$\quad Isao Ishikawa$^{4,2}$\quad Masahiro Ikeda$^{5,2}$\medskip\\
{\normalsize 1. NTT, Inc.}
{\normalsize 2. RIKEN AIP}
{\normalsize 3. CyberAgent, Inc.}
{\normalsize 4. Kyoto University}
{\normalsize 5. The University of Osaka}}
\date{}
\begin{document}
\maketitle

\begin{abstract}
We derive a new Rademacher complexity bound for deep neural networks using Koopman operators, group representations, and reproducing kernel Hilbert spaces (RKHSs).
The proposed bound describes why the models with high-rank weight matrices generalize well.
Although there are existing bounds that attempt to describe this phenomenon, these existing bounds can be applied to limited types of models.
We introduce an algebraic representation of neural networks and a kernel function to construct an RKHS to derive a bound
for a wider range of realistic models.
This work paves the way for the Koopman-based theory for Rademacher complexity bounds to be valid for more practical situations.
\end{abstract}

\section{Introduction}
Understanding the generalization property of deep neural networks has been one of the biggest challenges in the machine learning community. 
\red{The generalization property} describes how the model can fit unseen data.
Classically, the generalization error is bounded using the VC-dimension theory~\citep{harvey17,anthony09}.
Norm-based ~\citep{neyshabur15,bartlett17,golowich18,neyshabur18,wei19,li21,ju22,weinan22} and compression-based~\citep{arora18,Suzuki20} bounds have also been investigated.
The norm-based bounds depend on the matrix $(p, q)$ norm of the weight matrices, and the compression-based bounds are derived by investigating how much the networks can be compressed.
These bounds imply that low-rank weight matrices and weight matrices with small singular values, i.e., nearly low-rank matrices, have good effects for generalization.
\red{See Appendix~\ref{ap:existing_bounds} for more details about the existing bounds.}


On the other hand, phenomena in which models with weight matrices that are high-rank \red{and have large singular values} generalize well have been empirically observed~\citep{goldblum20}.
Since the norm-based and compression-based bounds focus only on the low-rank and nearly low-rank cases, they cannot describe these phenomena.
To theoretically describe these phenomena, the Koopman-based bound was proposed~\citep{hashimoto2024koopmanbased}.
Koopman operators are linear operators that describe the compositions of functions, which are essential structures of neural networks.
This existing bound is described by the ratio of the norm to the determinant of \red{each} weight matrix as
\begin{align}
\red{O\bigg(\prod_{l=1}^L\frac{G_l\Vert K_{\sigma_l} \Vert_{H_l} \Vert W_l\Vert^{s_{l-1}}}{\sqrt{S}\opn{det}(W_l^*W_l)^{1/4}}\bigg),}\label{eq:previous_bound}
\end{align}
\red{where $S$ is the sample size, $s_l$ represents the smoothness of the $l$th layer, $G_l$ is a factor determined by the $l\sim L$th layers, $K_{\sigma_l}$ is the Koopman operator with respect to the activation function $\sigma_l$, and $\Vert \cdot\Vert_{H_l}$ represents the operator norm in a Sobolev space $H_l$.}
Since the determinant factor appears in the denominator of the bound, even if the weight matrices are high rank and have large singular values, this bound can be small.
The Koopman-based bound theoretically sheds light on why neural networks with high-rank weight matrices generalize well.

However, the existing analysis for the Koopman-based bound strongly depends on the smoothness of models and the unboundedness of the data space, which excludes realistic models with bounded data space and with activation functions such as the hyperbolic tangent, sigmoid, and ReLU-type nonsmooth functions.
In addition, the dependency of the bound on the activation function is not clear.
\red{In fact, the factors $\Vert K_{\sigma_l}\Vert_{H_l}$ and $G_l$ in the bound~\eqref{eq:previous_bound} is hard to evaluate in many cases.}

In this paper, we propose a new Koopman-based bound that resolves the issues of the existing Koopman-based bounds.
\red{The proposed bound is described as}
\begin{align*}
\red{O\bigg(\prod_{l=1}^L\frac{G_l\Vert K_{\sigma_l} \Vert_{\mcl{L}_l} }{\sqrt{S}\opn{det}(W_l^*W_l)^{1/4}}\bigg),}
\end{align*}
\red{where $\Vert \cdot\Vert_{\mcl{L}_l}$ is the operator norm in a $L^2$ function space.}
Similar to the existing Koopman-based bounds, the proposed bound describes why high-rank neural networks generalize well.
\red{On the other hand, the difference of the function space $\mcl{L}_l$ from $H_l$ gives a significant benefit to the proposed bound.
We note that $\mcl{L}_l$ is larger than $H_l$, and $\mcl{L}_l$ enables us to analyze nonsmooth deep models and bounded data space.
In addition, it enables us to evaluate the factors $\Vert K_{\sigma_l}\Vert_{\mcl{L}_l}$ and $G_l$ easily (see Lemmas~\ref{lem:koopman_bounded}--\ref{lem:Leaky_relu}) and understand the effect of the activation functions on the deep model.
As a result, the proposed bound significantly improves the existing bound in the sense that it can be applied to a wider range of models and enables us to understand the models well.}

To achieve the above improvement, we introduce a kernel function defined on the parameter space using linear operators on a Hilbert space to which models belong.
This kernel function allows us to construct a reproducing kernel Hilbert space (RKHS) that describes realistic deep models with nonsmooth activation function and bounded data space.
We use the Rademacher complexity to derive generalization bounds.
\red{The Rademacher complexity measures the complexity of the model, which also describe the generalization property.}
Using the reproducing property of the RKHS, we can bound the Rademacher complexity with the operator norms of the linear operators.
For linear operators, we use group representations and Koopman operators. 
We first focus on algebraic representations of models using group representations.
A typical example is the representation of the affine group, which describes invertible neural networks.
We then focus on representations using Koopman operators with respect to the weight matrices, which describe neural networks with non-constant width.
\red{We schematically show the summary of the framework of the existing and proposed Koopman-based bounds in Figure~\ref{fig:overview}.}

The main contributions of this paper are as follows:\vspace{-.2cm}
\begin{itemize}[nosep,leftmargin=*]
\item We introduce an algebraic representation of models that can represent deep neural networks as typical examples.
To describe the action of parameters on models, we focus on group representations, which enables us to represent invertible neural networks, and Koopman operators, which enables us to represent more general neural networks
(Subsections~\ref{subsec:model_deep} and \ref{subsec:deep_model_general}).
\item We define a kernel function to construct an RKHS that describes the model. 
We derive a new Rademacher complexity bound using this kernel (Subsection~\ref{subsec:rkhs_deep}).
The proposed bound describes why the models with high-rank weight matrices generalize well for a wider range of models than the existing bounds (Section~\ref{sec:rademacher} and Subsections~\ref{subsec:injective_nn}--\ref{subsec:cnn}).
\end{itemize}

\begin{figure}[t]
    \centering
    \includegraphics[width=0.7\linewidth]{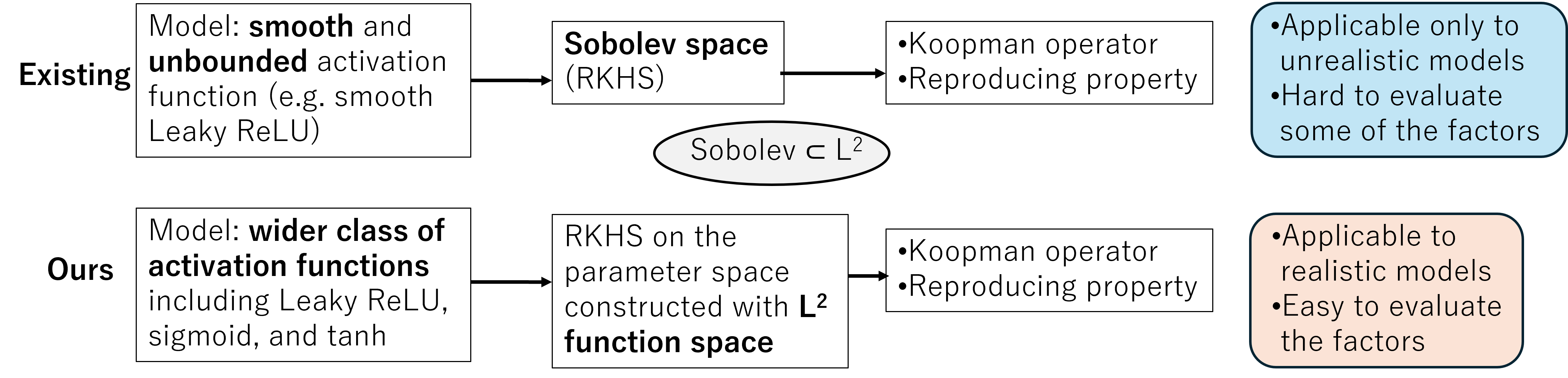}\vspace{-.4cm}
    \caption{{Summary of the framework of the existing and proposed Koopman-based bounds}}
    \label{fig:overview}
\end{figure}

\paragraph{Notations and remarks}
For $d\in\mathbb{N}$ and a Lebesgue measure space $\mcl{X}\subseteq\mathbb{R}^{d}$, let $L^2(\mcl{X})$ be the space of complex-valued squared Lebesgue-integrable functions on $\mcl{X}$.
We denote by $\mu_{\mcl{X}}$ the Lebesgue measure on $\mcl{X}$.
For a Hilbert space $\hil$, let $\bracket{\cdot,\cdot}_{\hil}$ be the inner product in $\hil$.
We omit the subscript $\hil$ when it is obvious.
We denote by $B(\hil_1,\hil_2)$ be the space of bounded linear operators from $\hil_1$ to $\hil_2$.
In particular, we denote \red{$B(\hil,\hil)=B(\hil)$}.
All the technical proofs are in Appendix~\ref{ap:proofs}.

\section{Preliminaries}
\subsection{Koopman operator}
Koopman operator is a linear operator that represents the composition of nonlinear functions.
Since neural networks are constructed using compositions, Koopman operators play an essential role in analyzing neural networks.
Let $\mcl{X}\subseteq\mathbb{R}^{d}$ be a Lebesgue measure space.
Koopman operators are defined as follows.
We also introduce weighted Koopman operator, which is a generalization of Koopman operator.
\begin{definition}[Koopman operator and weighted Koopman operator]
Let $\tilde{\mcl{X}}\subseteq \mathbb{R}^{d_1}$ and $\mcl{X}\subseteq \mathbb{R}^{d_2}$.
The {\em Koopman operator} $K_{\sigma}$ with respect to a map $\sigma:\tilde{\mcl{X}}\to\mcl{X}$ is a linear operator from $L^2(\mcl{X})$ to $L^2(\tilde{\mcl{X}})$ that is defined as $K_{\sigma}h(x)=h(\sigma(x))$ for $h\in L^2(\tilde{\mcl{X}})$.
In addition, the {\em weighted Koopman operator} $\tilde{K}_{\psi,\sigma}$ with respect to maps $\psi:\tilde{\mcl{X}}\to\mathbb{C}$ and $\sigma:\tilde{\mcl{X}}\to\mcl{X}$ is a linear operator from $L^2(\tilde{\mcl{X}})$ to $L^2(\mcl{X})$ that is defined as $\tilde{K}_{\psi,\sigma}h(x)=\psi(x)h(\sigma(x))$ for $h\in L^2(\mcl{X})$.
\end{definition}

We will consider the Koopman operators with respect to activation functions.
Throughout this paper, we assume these Koopman operators are bounded.
\begin{assumption}
[Boundedness of Koopman operators]\label{assum: boundedness_koopman}
The Koopman operator $K_{\sigma}$ with respect to a map $\sigma$ is bounded, i.e., the operator norm defined as $\Vert K_{\sigma}\Vert=\sup_{\Vert h\Vert=1}\Vert K_{\sigma}h\Vert$ is finite.
\end{assumption}
Indeed, we have the following lemma regarding the sufficient condition of the boundedness of Koopman operators.

\begin{lemma}\label{lem:koopman_bounded}
Assume $\sigma:\tilde{\mcl{X}}\to\mcl{X}$ is bijective, $\sigma^{-1}$ is differentiable, and the Jacobian of $\sigma^{-1}$ is bounded in $\mcl{X}$.
Then, we have $\Vert K_{\sigma}\Vert\le \sup_{x\in {\mcl{X}}} \vert J\sigma^{-1}(x)\vert ^{1/2}$, where $J\sigma^{-1}$ is the Jacobian of $\sigma^{-1}$.
In particular, the Koopman operator $K_{\sigma}$ is bounded.
\end{lemma}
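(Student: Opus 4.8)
The plan is to estimate $\Vert K_\sigma h\Vert$ directly by writing out the squared $L^2$-norm as an integral over $\tilde{\mcl{X}}$ and performing a change of variables that transfers it back to $\mcl{X}$, where the hypotheses on $\sigma^{-1}$ can be exploited. First I would fix $h\in L^2(\mcl{X})$ and write
\begin{equation*}
\Vert K_\sigma h\Vert_{L^2(\tilde{\mcl{X}})}^2=\int_{\tilde{\mcl{X}}}\vert h(\sigma(x))\vert^2\,d\mu_{\tilde{\mcl{X}}}(x).
\end{equation*}
The key step is the substitution $y=\sigma(x)$, equivalently $x=\sigma^{-1}(y)$. Since $\sigma$ is bijective and $\sigma^{-1}$ is differentiable, the change-of-variables formula applied with $\sigma^{-1}:\mcl{X}\to\tilde{\mcl{X}}$ as the transformation yields
\begin{equation*}
\int_{\tilde{\mcl{X}}}\vert h(\sigma(x))\vert^2\,d\mu_{\tilde{\mcl{X}}}(x)=\int_{\mcl{X}}\vert h(y)\vert^2\,\vert J\sigma^{-1}(y)\vert\,d\mu_{\mcl{X}}(y),
\end{equation*}
where I use $h(\sigma(\sigma^{-1}(y)))=h(y)$ and the Jacobian factor $\vert J\sigma^{-1}(y)\vert$ records the distortion of the measure.

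Next I would bound the Jacobian factor pointwise by its supremum and pull the constant out of the integral:
\begin{equation*}
\int_{\mcl{X}}\vert h(y)\vert^2\,\vert J\sigma^{-1}(y)\vert\,d\mu_{\mcl{X}}(y)\le\Big(\sup_{y\in\mcl{X}}\vert J\sigma^{-1}(y)\vert\Big)\Vert h\Vert^2.
\end{equation*}
Taking square roots and then the supremum over $\Vert h\Vert=1$ gives $\Vert K_\sigma\Vert\le\sup_{x\in\mcl{X}}\vert J\sigma^{-1}(x)\vert^{1/2}$. Finiteness of this supremum is exactly the boundedness assumption on the Jacobian, so $K_\sigma$ is bounded, which settles the final claim.

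The main obstacle I anticipate is the rigorous justification of the change-of-variables step, since the hypotheses only guarantee differentiability of $\sigma^{-1}$ rather than full $C^1$ regularity or Lipschitz continuity. To make this airtight I would invoke the version of the change-of-variables theorem for the Lebesgue integral that holds for injective differentiable maps (equivalently, the area formula): it applies here because $\sigma^{-1}$ is everywhere differentiable and injective, being the inverse of the bijection $\sigma$. A minor accompanying point is the measurability of $y\mapsto\vert J\sigma^{-1}(y)\vert$ and of the integrand $\vert h\vert^2\vert J\sigma^{-1}\vert$, both of which follow from differentiability of $\sigma^{-1}$ together with measurability of $h$.
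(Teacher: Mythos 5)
Your proof is correct and follows essentially the same route as the paper's: expand $\Vert K_\sigma h\Vert^2$ as an integral over $\tilde{\mcl{X}}$, change variables via $\sigma$ to obtain the Jacobian factor $\vert J\sigma^{-1}\vert$ on $\mcl{X}$, and bound it by its supremum. Your extra remarks on justifying the change of variables under mere differentiability of $\sigma^{-1}$ go slightly beyond the paper, which applies the substitution without comment.
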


The following lemma is regarding the boundedness of well-known elementwise activation functions defined as 
\begin{align*}
\sigma([x_1,\ldots,x_{d}])=[\tilde{\sigma}(x_1),\cdots,\tilde{\sigma}(x_{d})]
\end{align*}
for a map $\tilde{\sigma}:\mathbb{R}\to\mathbb{R}$.
\begin{lemma}\label{lem:sigmoid_tanh}
Let $\tilde{\mcl{X}}=[a_1,b_1]\times\cdots\times [a_d,b_d]\subseteq \mathbb{R}^d$ be a bounded rectanglar domain, and let $\mcl{X}=\sigma(\tilde{\mcl{X}})$.
If $\sigma$ is the elementwise hyperbolic tangent defined as $\tilde{\sigma}(x)=\tanh(x)$, then we have $\mcl{X}\subset [-1,1]^d$ and $\Vert K_{\sigma}\Vert\le (\prod_{i=1}^d\sup_{x\in \tilde{\sigma}([a_i,b_i])}1/(1-x^2))^{1/2}$.
If $\sigma$ is the elementwise sigmoid defined as $\tilde{\sigma}(x)=1/(1+\mr{e}^{-x})$, then we have $\mcl{X}\subset [-1,1]^d$ and $\Vert K_{\sigma}\Vert\le (\prod_{i=1}^d\sup_{x\in \tilde{\sigma}([a_i,b_i])}1/(x-x^2))^{1/2}$.
\end{lemma}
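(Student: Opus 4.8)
The plan is to derive both bounds as direct consequences of Lemma~\ref{lem:koopman_bounded}, so the work reduces to checking its three hypotheses and then computing the Jacobian of $\sigma^{-1}$ explicitly. Since $\sigma$ acts elementwise through a single scalar map $\tilde{\sigma}$, I would first record that both $\tilde{\sigma}(x)=\tanh(x)$ and $\tilde{\sigma}(x)=1/(1+\mr{e}^{-x})$ are smooth and strictly increasing on $\mathbb{R}$. Hence each $\tilde{\sigma}$ restricts to a continuous increasing bijection from $[a_i,b_i]$ onto the interval $\tilde{\sigma}([a_i,b_i])=[\tilde{\sigma}(a_i),\tilde{\sigma}(b_i)]$, so $\sigma$ is a bijection from $\tilde{\mcl{X}}$ onto $\mcl{X}=\prod_{i=1}^d\tilde{\sigma}([a_i,b_i])$ with a smooth, elementwise inverse. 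The containment $\mcl{X}\subset[-1,1]^d$ is immediate from the ranges: $\tanh$ maps into $(-1,1)$ and the sigmoid maps into $(0,1)$, both inside $[-1,1]$.

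Next I would compute the scalar inverses and their derivatives. For $\tanh$, the inverse is $\tilde{\sigma}^{-1}(y)=\tfrac12\log\tfrac{1+y}{1-y}$ with $(\tilde{\sigma}^{-1})'(y)=1/(1-y^2)$; for the sigmoid, the inverse is the logit $\tilde{\sigma}^{-1}(y)=\log\tfrac{y}{1-y}$ with $(\tilde{\sigma}^{-1})'(y)=1/(y-y^2)$. Because $\sigma^{-1}$ acts coordinatewise, its Jacobian matrix at $x=[x_1,\dots,x_d]\in\mcl{X}$ is diagonal, so its determinant factorizes as $J\sigma^{-1}(x)=\prod_{i=1}^d(\tilde{\sigma}^{-1})'(x_i)$, i.e.\ $\prod_{i=1}^d 1/(1-x_i^2)$ in the $\tanh$ case and $\prod_{i=1}^d 1/(x_i-x_i^2)$ in the sigmoid case.

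It remains to verify boundedness of this Jacobian on $\mcl{X}$ and to factor the resulting supremum. Here I would use that each interval $\tilde{\sigma}([a_i,b_i])$ is a compact set lying strictly inside $(-1,1)$ (resp.\ $(0,1)$), on which the continuous and positive factor $1/(1-x^2)$ (resp.\ $1/(x-x^2)$) stays finite; this gives the boundedness hypothesis of Lemma~\ref{lem:koopman_bounded}. Since $\mcl{X}$ is the Cartesian product of these intervals and each factor of $J\sigma^{-1}$ depends on a single coordinate, the supremum of the product over $\mcl{X}$ equals the product of the coordinatewise suprema (all factors being positive), $\sup_{x\in\mcl{X}}\vert J\sigma^{-1}(x)\vert=\prod_{i=1}^d\sup_{x\in\tilde{\sigma}([a_i,b_i])}(\tilde{\sigma}^{-1})'(x)$. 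Substituting into the estimate $\Vert K_{\sigma}\Vert\le\sup_{x\in\mcl{X}}\vert J\sigma^{-1}(x)\vert^{1/2}$ of Lemma~\ref{lem:koopman_bounded} yields exactly the two claimed bounds.

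I would expect no genuine obstacle here: the only point needing care is confirming that the intervals $\tilde{\sigma}([a_i,b_i])$ stay strictly away from the singularities of the Jacobian factors ($\pm1$ for $\tanh$, and $0,1$ for the sigmoid), which is automatic because $[a_i,b_i]$ is bounded and $\tilde{\sigma}$ is a bijection onto an interval interior to its range. The factorization of the supremum over a product domain is routine, but it is the step that actually produces the product form appearing in the statement.
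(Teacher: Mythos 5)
Your proof is correct and follows exactly the route the paper intends: the paper gives no separate proof of this lemma in the appendix, treating it as a direct corollary of Lemma~\ref{lem:koopman_bounded}, and your verification of bijectivity, the explicit inverse derivatives $1/(1-y^2)$ and $1/(y-y^2)$, the diagonal Jacobian factorization, and the coordinatewise supremum over the product domain supplies precisely the omitted details. Nothing further is needed.
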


Even if $\sigma$ is not differentiable, the Koopman operator is bounded, and we can evaluate the upper bound in some cases.
\begin{lemma}\label{lem:Leaky_relu}
Let $\tilde{\mcl{X}}=\mcl{X}=\mathbb{R}^{d}$.
Let $\sigma$ be the elementwise Leaky ReLU defined as $\tilde{\sigma}(x)=ax$ for $x\le 0$ and $\tilde{\sigma}(x)=x$ for $x>0$, where $a>0$.
Then, we have $\Vert K_{\sigma}\Vert\le \max\{1, {1}/{a^d}\}^{1/2}$.
\end{lemma}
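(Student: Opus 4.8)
The plan is to exploit the fact that, although $\sigma$ fails to be differentiable on the coordinate hyperplanes, it is a piecewise-linear bijection of $\mathbb{R}^d$ onto itself, and to reduce the claim to a change-of-variables computation of the same flavour as Lemma~\ref{lem:koopman_bounded}. Since $a>0$, each scalar map $\tilde{\sigma}$ is a strictly increasing bijection of $\mathbb{R}$, so $\sigma$ is a bijection of $\mathbb{R}^d$; its inverse acts elementwise by $y\mapsto y/a$ on the negative half-line and $y\mapsto y$ on the positive half-line. The only place where the argument requires care is the non-smoothness of $\sigma$, which I would handle by partitioning the domain into regions on which $\sigma$ is smooth.

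Concretely, first I would write $\Vert K_{\sigma}h\Vert^2=\int_{\mathbb{R}^d}\vert h(\sigma(x))\vert^2\,dx$ and decompose $\mathbb{R}^d$ into the $2^d$ open orthants indexed by the sign pattern of the coordinates; the union of their boundaries (the coordinate hyperplanes) has Lebesgue measure zero and may be discarded. On the orthant in which exactly $k$ of the coordinates are negative, $\sigma$ restricts to the linear map given by the diagonal matrix whose diagonal entries are $a$ in the $k$ negative slots and $1$ elsewhere; hence it is a smooth bijection onto the corresponding image orthant with constant Jacobian $a^{k}$.

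Next I would apply the change of variables $y=\sigma(x)$ on each orthant separately. Because $a>0$ preserves signs, the image orthant has the same number $k$ of negative coordinates, and the Jacobian of $\sigma^{-1}$ there equals $a^{-k}$. Summing the contributions gives $\Vert K_{\sigma}h\Vert^2=\int_{\mathbb{R}^d}\vert h(y)\vert^2\,\vert J\sigma^{-1}(y)\vert\,dy$, exactly as in the smooth case treated in Lemma~\ref{lem:koopman_bounded}, where $\vert J\sigma^{-1}(y)\vert$ is now the piecewise-constant function taking the value $a^{-k}$ on the orthant with $k$ negative coordinates. Bounding the integrand by its supremum yields $\Vert K_{\sigma}h\Vert^2\le\big(\sup_{y}\vert J\sigma^{-1}(y)\vert\big)\Vert h\Vert^2$.

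Finally I would evaluate the supremum: as $k$ ranges over $\{0,1,\dots,d\}$ the value $a^{-k}$ ranges over $\{1,a^{-1},\dots,a^{-d}\}$, whose maximum is $1$ when $a\ge 1$ and $a^{-d}$ when $a<1$, i.e.\ $\max\{1,1/a^d\}$. Taking square roots gives the claimed bound $\Vert K_{\sigma}\Vert\le\max\{1,1/a^d\}^{1/2}$. I expect the only genuinely delicate point to be the justification of the orthant-wise change of variables across the measure-zero non-smooth set; everything else is a direct computation of diagonal Jacobian determinants.
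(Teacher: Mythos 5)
Your proof is correct and follows essentially the same route as the paper's: both decompose $\mathbb{R}^d$ into the $2^d$ orthants, on each of which $\sigma$ is a diagonal linear map, apply the change of variables orthant by orthant, and bound the resulting piecewise-constant Jacobian factor $a^{-k}$ by $\max\{1,1/a^d\}$. Your explicit attention to discarding the measure-zero coordinate hyperplanes is a point the paper leaves implicit, but the argument is the same.
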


\subsection{Reproducing kernel Hilbert space (RKHS)}
In addition to the $L^2$ function space, we also consider reproducing kernel Hilbert spaces.
Let $\Theta$ be a non-empty set for parameters.
We first introduce positive definite kernel.
\begin{definition}[Positive definite kernel]\label{def:pdk_rkhm}
 A map $k:\Theta\times \Theta\to\mathbb{C}$ is called a {\em positive definite kernel} if it satisfies the following conditions: \smallskip\\
$\bullet$ $k(\theta_1,\theta_2)=\overline{k(\theta_2,\theta_1)}$ \;for $\theta_1,\theta_2\in\Theta$,\\
$\bullet$ $\sum_{i,j=1}^n\overline{c_i}c_jk(\theta_i,\theta_j)\ge 0$\;for $n\in\mathbb{N}$, $c_i\in\mathbb{C}$, and $\theta_i\in\Theta$.
\end{definition}
Let $\phi: \Theta\to\mathbb{C}^{\Theta}$ be the {\em feature map} associated with $k$, defined as $\phi(\theta)=k(\cdot,\theta)$ for $\theta\in\Theta$ and let
$\mcl{R}_{k,0}=\{\sum_{i=1}^{n}\phi(\theta_i)c_i|\ n\in\mathbb{N},\ c_i\in\mathbb{C},\ \theta_i\in \Theta\ (i=1,\ldots,n)\}$.
We can define a map $\bbracket{\cdot,\cdot}_{\mcl{R}_k}:\mcl{R}_{k,0}\times \mcl{R}_{k,0}\to\mathbb{C}$ as
\begin{align*}
\Bbracket{\sum_{i=1}^{n}\phi(\theta_i)c_i,\sum_{j=1}^{m}\phi(\xi_j)d_j}_{\mcl{R}_k}=\sum_{i=1}^{n}\sum_{j=1}^{m}\overline{c_i}d_jk(\theta_i,\xi_j).
\end{align*}
The {\em reproducing kernel Hilbert space (RKHS)} $\mcl{R}_k$ associated with $k$ is defined as the completion of $\mcl{R}_{k,0}$. 
One important property of RKHSs is the reproducing property
$\bracket{\phi(\theta),f}_{\mcl{R}_k}=f(\theta)$
for $f\in\mcl{R}_{k}$ and $\theta\in \Theta$, which is also useful for deriving a Rademacher complexity bound.


\subsection{Group representation}
Group representation is also a useful tool to analyze the deep structure of neural networks~\citep{sonoda25ridgelet_deep}.
Let $G$ be a locally compact group. 
A {\em unitary representation} $\rho:G\to B(\hil)$ for a Hilbert space $\hil$ is a map whose image is in the space of unitary operators on $\hil$, that satisfies $\rho(g_1g_2)=\rho(g_1)\rho(g_2)$ and $\rho(g_1^{-1})=\rho(g_1)^*$ for $g_1,g_2\in G$, and for which $g\mapsto \rho(g)h$ is continuous for any $h\in\hil$.
Here, $^*$ means the adjoint.
If there exists no nontrivial subspace $\mcl{M}$ of $\hil$ such that $\rho(g)\mcl{M}\subseteq \mcl{M}$ for any $g\in G$, then the representation $\rho$ is called {\em irreducible}.

For irreducible unitary representations, we have the following fundamental result (see, e.g.~\citet[Lemma 3.5]{folland95}), which we will apply to show the universality of the model.
Here, the commutant of a subset $\mcl{A}\subseteq B(\hil)$ is defined as the set $\{A\in B(\hil)\,\mid\,AB=BA\mbox{ for }B\in\alg \}$.
\begin{lemma}[Schur's lemma]\label{lem:schur_original}
A unitary representation $\rho$ of $G$ is irreducible if and only if the commutant of $\rho(G)$ contains only scalar multiples of the identity.
\end{lemma}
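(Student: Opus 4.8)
The plan is to prove the two implications separately, using the unitarity of $\rho$ in an essential way for each. Write $\mcl{C}=\{A\in B(\hil)\mid A\rho(g)=\rho(g)A \text{ for all } g\in G\}$ for the commutant of $\rho(G)$. A preliminary observation I would record is that $\mcl{C}$ is closed under adjoints: applying $^*$ to $A\rho(g)=\rho(g)A$ and using $\rho(g)^*=\rho(g^{-1})$ gives $\rho(g^{-1})A^*=A^*\rho(g^{-1})$ for all $g$, so $A^*\in\mcl{C}$. Hence $\mcl{C}$ is a $*$-closed subalgebra of $B(\hil)$; this lets me split any $A\in\mcl{C}$ into self-adjoint parts that again lie in $\mcl{C}$.

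For the direction ``commutant trivial $\Rightarrow$ irreducible,'' I would argue the contrapositive. Suppose $\rho$ is reducible, so there is a nontrivial closed invariant subspace $\mcl{M}$ (replace a given invariant subspace by its closure, which stays invariant by continuity of $\rho(g)$). Since each $\rho(g)$ is unitary and $\rho(g)\mcl{M}\subseteq\mcl{M}$, the orthogonal complement $\mcl{M}^\perp$ is also invariant, and therefore the orthogonal projection $P$ onto $\mcl{M}$ satisfies $P\rho(g)=\rho(g)P$ for all $g$. As $\mcl{M}$ is neither $\{0\}$ nor $\hil$, we have $P\neq 0,I$, so $P\in\mcl{C}$ is not a scalar multiple of the identity.

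For the converse ``irreducible $\Rightarrow$ commutant trivial,'' I take $A\in\mcl{C}$ and, by the preliminary observation, reduce to showing that every self-adjoint $B\in\mcl{C}$ is a scalar. Here I would invoke the spectral theorem: let $E(\cdot)$ be the spectral measure of $B$. Since every operator commuting with $B$ commutes with every bounded Borel function of $B$, and in particular with each spectral projection $E(\Omega)$, each $\rho(g)$ commutes with $E(\Omega)$; thus $E(\Omega)\in\mcl{C}$. Each $E(\Omega)$ is then an orthogonal projection in the commutant, so its range is a closed invariant subspace, and irreducibility forces $E(\Omega)\in\{0,I\}$. A projection-valued measure taking only the values $0$ and $I$ is concentrated at a single point $\lambda\in\mathbb{R}$, whence $B=\lambda I$. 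Writing a general $A\in\mcl{C}$ as $B+\iu C$ with $B,C$ self-adjoint and in $\mcl{C}$ then yields $A=(\lambda+\iu\nu)I$.

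The step I expect to be the main obstacle is the claim, inside the converse, that the spectral projections of $B$ commute with $\rho(g)$. The clean justification is that commuting with the self-adjoint operator $B$ implies commuting with the entire bounded Borel functional calculus of $B$ --- a standard consequence of the spectral theorem --- but spelling this out rigorously (e.g., first for continuous functions via Weierstrass approximation of the resolvent, then extending to indicator functions by a monotone-class or strong-limit argument) is where the real work lies. Everything else reduces to unitarity and elementary Hilbert-space geometry.
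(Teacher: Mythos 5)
The paper does not prove this lemma at all; it cites it as a standard fact (Folland 1995, Lemma 3.5), and your argument is precisely the standard proof found there: orthogonal projections onto closed invariant subspaces give the ``commutant trivial $\Rightarrow$ irreducible'' direction, while $*$-closedness of the commutant plus the spectral projections of a self-adjoint commutant element give the converse. Your proof is correct (with the usual convention, implicit in the paper's definition, that invariant subspaces are taken to be closed), so there is nothing to reconcile with the paper.
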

We also apply the following fundamental result (see, e.g., \citet[Theorem I.7.1]{davidson96}). 
\begin{lemma}[von Neumann double commutant theorem]\label{lem:double_commutant}
Let $\alg$ be a subalgebra of $B(\hil)$ that satisfies ``$A\in \alg\Rightarrow A^*\in\alg$'' and is closed with respect to the operator norm. 
Then, the double commutant (i.e., the commutant of the commutant) of $\alg$ is equal to the closure of $\alg$ with respect to the strong operator topology.
\end{lemma}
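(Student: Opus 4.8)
The plan is to establish the equality through two inclusions. Write $\alg'$ for the commutant of $\alg$, $\alg''=(\alg')'$ for the double commutant, and $\overline{\alg}^{\mr{s}}$ for the closure of $\alg$ in the strong operator topology. The easy inclusion $\overline{\alg}^{\mr{s}}\subseteq\alg''$ needs no hypothesis beyond $\alg$ being a set of operators. First, $\alg\subseteq\alg''$ always holds, since every element of $\alg$ commutes with every element of $\alg'$ by the very definition of $\alg'$. Second, any commutant is closed in the weak (hence also the strong) operator topology: for fixed $C$ both $B\mapsto BC$ and $B\mapsto CB$ are weak-operator continuous, so $\{B:BC=CB\}$ is weak-operator closed, and therefore $\alg''=(\alg')'$ is strong-operator closed. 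A strong-operator-closed set containing $\alg$ must contain the smallest such set $\overline{\alg}^{\mr{s}}$, which gives the inclusion.

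For the reverse inclusion $\alg''\subseteq\overline{\alg}^{\mr{s}}$, fix $T\in\alg''$; I must show that for every finite family $\xi_1,\dots,\xi_n\in\hil$ and every $\epsilon>0$ there is $A\in\alg$ with $\Vert(T-A)\xi_i\Vert<\epsilon$ for all $i$. I would first treat $n=1$. Given $\xi\in\hil$, form the closed subspace $\modu=\overline{\alg\xi}$ and let $P\in B(\hil)$ be the orthogonal projection onto $\modu$. Because $\alg$ is an algebra, $\modu$ is invariant under every $A\in\alg$; because $\alg$ is $*$-closed, $\modu$ is also invariant under each $A^*$, so $\modu^{\perp}$ is $\alg$-invariant as well. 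Invariance of both $\modu$ and $\modu^{\perp}$ is exactly the statement $PA=AP$ for all $A\in\alg$, i.e. $P\in\alg'$. Since $T\in\alg''$, $T$ commutes with $P$, so $T\modu\subseteq\modu$. Using $\xi\in\modu$ (which holds because $\alg$ contains, or approximates, the identity) we get $T\xi=TP\xi=PT\xi\in\modu=\overline{\alg\xi}$, so $T\xi$ is a norm-limit of $\{A\xi:A\in\alg\}$; this settles $n=1$.

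To pass from one vector to $n$ vectors, I would use the amplification trick: let $\hil^{(n)}=\hil\oplus\cdots\oplus\hil$ and represent $\alg$ diagonally via $A\mapsto A^{(n)}=\mr{diag}(A,\dots,A)$. The crucial algebraic computation is that the commutant of $\{A^{(n)}:A\in\alg\}$ consists of all operator matrices $[B_{ij}]$ with entries $B_{ij}\in\alg'$, from which one deduces that its double commutant is exactly $\{T^{(n)}:T\in\alg''\}$. Applying the $n=1$ result to the amplified algebra, the element $T^{(n)}$, and the single vector $(\xi_1,\dots,\xi_n)\in\hil^{(n)}$ produces $A\in\alg$ with $\sum_i\Vert(T-A)\xi_i\Vert^2<\epsilon^2$, which is precisely the desired strong-operator approximation. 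I expect the main obstacle to be this amplification step, specifically the matrix-level identification of the commutant and double commutant of the diagonal algebra, together with the technical point in the single-vector argument that $\xi\in\modu$, which requires $\alg$ to act nondegenerately (e.g. to contain the identity); a nonunital but norm-closed $*$-algebra would need an approximate-identity argument at that spot.
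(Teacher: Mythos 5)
The paper gives no proof of this lemma; it is quoted verbatim from the cited reference (Davidson, Theorem I.7.1), so there is nothing internal to compare against. Your sketch is the standard proof of the double commutant theorem and the plan is sound: the easy inclusion via the weak-operator closedness of commutants, the single-vector case via the projection onto $\overline{\alg\xi}$ lying in $\alg'$, and the passage to finitely many vectors via the $n$-fold amplification, whose commutant is $M_n(\alg')$ and whose double commutant is the diagonal copy of $\alg''$. The caveat you raise at the end is not a pedantic one: as stated in the paper, with no assumption that $\alg$ contains the identity (or at least acts nondegenerately), the claim is actually false --- take $\alg=\mathbb{C}P$ for a proper orthogonal projection $P$, which is a norm-closed $*$-subalgebra whose double commutant contains $I\notin\alg=\overline{\alg}^{\mathrm{SOT}}$. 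Your proof correctly localizes where that hypothesis enters (the step $\xi\in\overline{\alg\xi}$), and in the paper's application the algebra generated by $\rho(G)$ does contain $I=\rho(e)$, so nothing downstream is affected. The only thing separating your proposal from a complete proof is filling in the matrix computation in the amplification step, which is routine.
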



\section{Problem Setting}\label{sec:representation_setting}
We formulate deep models, which include the neural network model as a special example, using operators.
Then, we define an RKHS to analyze the deep model.
\subsection{Algebraic representation of deep models with group representations}\label{subsec:model_deep}
Let $G$ be a locally compact group and $\rho: G \to B(\mathcal{H})$ be a unitary representation on a Hilbert space $\mathcal{H}$.
We consider an algebraic representation of $L$-layered deep model in $\hil$
\begin{align}
f(g_1,\ldots,g_L) = \rho(g_1) A_1 \rho(g_2) A_2 \cdots A_{L-1} \rho(g_L) v,\label{eq:deep_model}
\end{align}
where $g_1,\ldots,g_L\in G$ are learnable parameters, $A_1,\ldots, A_L\in B(\mathcal{H})$ and $v\in \hil$ are fixed.

\begin{example}[Scaled neural network with invertible weights]\label{ex:affine}
Let $G = GL(d) \ltimes \mathbb{R}^d$ be the affine group and $\mathcal{H} = L^2(\mathbb{R}^d)$.
\red{Here, $GL(d)$ is the group of $d$ by $d$ invertible matrices.}
Let $\rho:G\to B(\hil)$ be the representation of $G$ on $\hil$ defined as
$\rho(g)h(x) = |\det W|^{1/2} h(W(x-b))$ for $g=(W,b)\in G$, $h \in L^2(\mathbb{R}^d)$, and $x \in \mathbb{R}^d$.
Note that $\rho$ is an irreducible unitary representation.
In addition, let $v\in L^2(\mathbb{R}^d)$ be the final nonlinear transformation, $\sigma_l:\mathbb{R}^d\to\mathbb{R}^d$ be an activation function satisfying Assumption~\ref{assum: boundedness_koopman}, and $A_l=K_{\sigma_l}$ be the Koopman operator with respect to $\sigma_l$ for $l=1,\ldots,L-1$. 
For example, $\sigma_l$ is the elementwise Leaky ReLU.
Then, the deep model~\eqref{eq:deep_model} is
\begin{align*}
f(g_1,\ldots,g_L)(x) & = v(W_L \sigma_{L-1}(W_{L-1} \cdots \sigma_1(W_1 x - W_1 b_1) \cdots - W_{L-1} b_{L-1}) - W_L b_L)\\
&\qquad\times |\det W_1|^{1/2} \cdots |\det W_{L-1}|^{1/2}.
\end{align*}
\end{example}

\begin{example}[Deep model with new structures]
In addition to describing existing neural networks, we can develop a new model using the abstract model \eqref{eq:deep_model}.
Let $G=\{(a,b,c)\,\mid\, a,b\in\mathbb{R}^d,\ c\in\mathbb{R}\}$ be the Heisenberg group~\citep{thangavelu98}.
The product in $G$ is defined as $(a_1,b_1,c_1)\cdot (a_2,b_2,c_2)=(a_1+a_2,b_1+b_2,1/2\bracket{a_1,b_2})$, where $\bracket{a_1,b_2}$ is the Euclidean inner product of $a_1$ and $b_2$.
Let $\hil=L^2(\mathbb{R}^d)$ and $\rho:G\to B(\hil)$ be the representation of $G$ on $\hil$ defined as $\rho(g)h(x)=\mr{e}^{\mr{i}(c-1/2\bracket{a,b})}\mr{e}^{\mr{i}\bracket{a,x}}h(x-b)$ for $g=(a,b,c)$, where $\mr{i}$ is the imaginary unit.
Note that $\rho$ is an irreducible unitary representation.
Let $v$ and $A_l$ be the same as in Example~\ref{ex:affine}.
Then, the deep model~\eqref{eq:deep_model} is
\begin{align*}
f(g_1,\ldots,g_L)(x) = & \mr{e}^{\mr{i}(c_1-\bracket{a_1,b_1}/2)}\cdots \mr{e}^{\mr{i}(c_L-\bracket{a_L,b_L}/2)}\\
&\quad \cdot\mr{e}^{\mr{i}\bracket{a_1,x}}\mr{e}^{\mr{i}\bracket{a_2,\sigma_1(x-b_1)}}\cdots\mr{e}^{\mr{i}\bracket{a_L,\sigma_{L-1}(\sigma_{L-2}(\cdots\sigma_1(x-b_1)\cdots -b_{L-2})-b_{L-1})}}\\
&\quad \cdot v(\sigma_{L-1}(\cdots\sigma_1(x-b_1)-b_{L-1})-b_L).
\end{align*}
\end{example}

Instead of directly considering the model \eqref{eq:deep_model}, we focus on the following regularized model with a parameter $c>0$ on a data space $\mcl{X}_0$:
\begin{align}
F_c(g_1, \ldots, g_L, x)=\bracket{\rho(g_1) A_1 \rho(g_2) A_2 \cdots A_{L-1} \rho(g_L) v,p_{c,x}},\label{eq:integral_form}
\end{align}
where $p_{c,x}\in\hil$ for $c>0$ and $x\in \mcl{X}_0$.
\red{We assume for any $c>0$, there exists $E(c)>0$ such that $\Vert p_{c,x}\Vert^2\le E(c)$}.  
This regularization is required to technically derive the Rademacher complexity bound using the framework of RKHSs.
However, as the following example indicates, the regularized model~\eqref{eq:integral_form} sufficiently approximates the original model~\eqref{eq:deep_model}.
\begin{example}\label{ex:dirac_delta}
Consider the same setting in Example~\ref{ex:affine}.
Let $p_{c,x}(y)=(c/\pi)^{d/2}\mr{e}^{-c\Vert y-x\Vert^2}$ for $c>0$ and $x\in\mcl{X}_0$.
Then, $p_{c,x}\in L^2(\mathbb{R}^d)$ and \red{$\Vert p_{c,x}\Vert^2=(2c/\pi)^{d/2}$}.
\red{Since $p_{c,x}$ goes to the Dirac delta function centered at $x$ as $c\to\infty$}, we have 
\begin{align*}
F_c(g_1, \ldots, g_L, x)= \int_{\mathbb{R}^d} \rho(g_1) A_1 \rho(g_2) A_2 \cdots A_{L-1} \rho(g_L) v(y) p_{c,x}(y)  \mathrm{d}y.
\end{align*}
Note that for any $x\in\mathbb{R}^d$ and any $g_1,\ldots,g_L\in G$, $\lim_{c\to\infty}F_c(g_1,\ldots,g_L,x)=f(g_1,\ldots,g_L)(x)$.
Thus, if $c$ is sufficiently large, $F_c(g_1,\ldots,g_L,x)$ approximates $f(g_1,\ldots,g_L)(x)$ well.
\end{example}

\subsection{RKHS for analyzing deep models}\label{subsec:rkhs_deep}
We use the Rademacher complexity to derive a generalization bound.
According to Theorem 3.5 in~\citet{mohri18}, the generalization error is \red{bounded} by the Rademacher complexity.
Thus, if we obtain a Rademacher complexity bound, then we can also bound the generalization error.
To derive a Rademacher complexity bound, we apply the framework of RKHSs.
The Hilbert space $\hil$ to which the \red{models} belong does not always have the reproducing property.
Indeed, a typical example of $\hil$ is $L^2(\mathbb{R}^d)$ as we discussed in Example~\ref{ex:affine}.
Thus, we consider an RKHS that is a function space on the parameter space $G$ and isomorphic to a subspace of $\hil$.
We can regard the deep model on the data space $\mcl{X}_0$ as a function on $G$ through this isomorphism and make use of the reproducing property on $G$.
\red{Here, the isomorphism ensures that the mathematical structure of the RKHS is the same as the subspace of $\hil$.}
We define the following positive definite kernel $k: (G \times \cdots \times G) \times (G \times \cdots \times G) \to \mathbb{C}$ to construct an RKHS to analyze the deep model~\eqref{eq:deep_model}:
\begin{align*}
k((g_1, \ldots, g_L), (\tilde{g}_1, \ldots, \tilde{g}_L)) = \langle \rho(g_1) A_1 \cdots A_{L-1} \rho(g_L) v, \rho(\tilde{g}_1) A_1 \cdots A_{L-1} \rho(\tilde{g}_L) v \rangle_\mathcal{H}.
\end{align*}
We denote the RKHS associated with $k$ as $\mcl{R}_k$.

Let $\mathbf{g}=(g_1,\ldots,g_L)$, $\phi(\mathbf{g})=k(\cdot,\mathbf{g})$, and $\tilde{\phi}(\mathbf{g})=\rho(g_1) A_1 \cdots A_{L-1} \rho(g_L) v$.
Let $\mcl{K}_0=\{\sum_{i=1}^n c_i \tilde{\phi}(\mathbf{g}_i) \mid n \in \mathbb{N}, \mathbf{g}_i \in G^L, c_i \in \mathbb{C}\}$ and $\mcl{K}=\overline{\mcl{K}_0}$.
Note that $\mcl{K}$ is a sub-Hilbert space of $\hil$.
Let $\iota:\mcl{K}\to \mcl{R}_k$ defined as $\iota(h)=(\mathbf{g}\mapsto \langle  \tilde{\phi}(\mathbf{g}), h \rangle_\mathcal{H})$.
The map $\iota$ enables us to regard the Hilbert space $\mcl{K}$, where the deep model is defined, as the RKHS $\mcl{R}_k$. 
\begin{proposition}\label{prop:isomorphism}
The map $\iota$ is isometrically isomorphic.
\end{proposition}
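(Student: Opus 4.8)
The plan is to verify the isometry first on the dense subspace $\mcl{K}_0$, where everything is an explicit finite sum, and then pass to the completion, taking care to reconcile the continuous extension with the defining formula for $\iota$. First I would compute $\iota$ explicitly on $\mcl{K}_0$. For $h=\sum_{i=1}^n c_i\tilde{\phi}(\mathbf{g}_i)\in\mcl{K}_0$, linearity of $\langle\cdot,\cdot\rangle_{\hil}$ in its second slot together with the definition $k(\mathbf{g},\mathbf{g}_i)=\langle\tilde{\phi}(\mathbf{g}),\tilde{\phi}(\mathbf{g}_i)\rangle_{\hil}$ gives, for every $\mathbf{g}\in G^L$,
\[
\iota(h)(\mathbf{g})=\Big\langle\tilde{\phi}(\mathbf{g}),\sum_{i}c_i\tilde{\phi}(\mathbf{g}_i)\Big\rangle_{\hil}=\sum_i c_i\,k(\mathbf{g},\mathbf{g}_i)=\Big(\sum_i c_i\,\phi(\mathbf{g}_i)\Big)(\mathbf{g}),
\]
so that $\iota(h)=\sum_i c_i\phi(\mathbf{g}_i)\in\mcl{R}_{k,0}$. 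Since $\iota(h)$ is defined directly through the inner product with $h$, it does not depend on the particular representation chosen for $h$, so well-definedness is automatic; the displayed identity then shows that $\iota$ is linear and maps $\mcl{K}_0$ onto $\mcl{R}_{k,0}$.

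Next I would check that $\iota$ preserves the inner product on $\mcl{K}_0$. Writing $h=\sum_i c_i\tilde{\phi}(\mathbf{g}_i)$ and $h'=\sum_j d_j\tilde{\phi}(\tilde{\mathbf{g}}_j)$, expanding $\langle h,h'\rangle_{\hil}$ and using the definition of $k$ gives $\langle h,h'\rangle_{\hil}=\sum_{i,j}\overline{c_i}d_j\,k(\mathbf{g}_i,\tilde{\mathbf{g}}_j)$. On the other hand, the definition of the RKHS inner product yields $\langle\iota(h),\iota(h')\rangle_{\mcl{R}_k}=\langle\sum_i c_i\phi(\mathbf{g}_i),\sum_j d_j\phi(\tilde{\mathbf{g}}_j)\rangle_{\mcl{R}_k}=\sum_{i,j}\overline{c_i}d_j\,k(\mathbf{g}_i,\tilde{\mathbf{g}}_j)$, the very same expression. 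Hence $\iota$ is a linear, inner-product-preserving bijection from $\mcl{K}_0$ onto $\mcl{R}_{k,0}$, and in particular an isometry there.

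Finally I would pass to the completions. Because $\mcl{K}_0$ is dense in $\mcl{K}$ and $\iota$ is isometric, it extends uniquely to an isometry $\mcl{K}\to\mcl{R}_k$; its range is complete, hence closed, and contains the dense set $\mcl{R}_{k,0}$, so the range is all of $\mcl{R}_k$ and the extension is bijective. The step I expect to require the most care is reconciling this abstract extension with the pointwise formula $\iota(h)(\mathbf{g})=\langle\tilde{\phi}(\mathbf{g}),h\rangle_{\hil}$ for general $h\in\mcl{K}$: given $h_n\to h$ in $\mcl{K}$ with $h_n\in\mcl{K}_0$, the sequence $\iota(h_n)$ is Cauchy in $\mcl{R}_k$ with some limit $f$, and by the reproducing property $|\iota(h_n)(\mathbf{g})-f(\mathbf{g})|=|\langle\phi(\mathbf{g}),\iota(h_n)-f\rangle_{\mcl{R}_k}|\le\|\phi(\mathbf{g})\|_{\mcl{R}_k}\,\|\iota(h_n)-f\|_{\mcl{R}_k}$, so norm convergence in $\mcl{R}_k$ forces pointwise convergence. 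Since also $\iota(h_n)(\mathbf{g})=\langle\tilde{\phi}(\mathbf{g}),h_n\rangle_{\hil}\to\langle\tilde{\phi}(\mathbf{g}),h\rangle_{\hil}$ by Cauchy--Schwarz in $\hil$, the limit $f$ must be the function $\mathbf{g}\mapsto\langle\tilde{\phi}(\mathbf{g}),h\rangle_{\hil}$; this identifies the continuous extension with the stated formula, confirms $\iota(h)\in\mcl{R}_k$, and gives $\|\iota(h)\|_{\mcl{R}_k}=\|h\|_{\hil}$, completing the proof that $\iota$ is isometrically isomorphic.
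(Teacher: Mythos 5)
Your proof is correct and follows essentially the same route as the paper: verify that $\iota$ carries $\tilde{\phi}(\mathbf{g})$ to $\phi(\mathbf{g})$ and preserves the inner product on the dense subspace $\mcl{K}_0$, then pass to the completions to get a surjective isometry. Your closing step reconciling the continuous extension with the pointwise formula $\iota(h)(\mathbf{g})=\langle\tilde{\phi}(\mathbf{g}),h\rangle_{\hil}$ is in fact slightly more careful than the paper, which instead proves injectivity as a separate lemma and asserts $\lim_i\iota(h_i)=\iota(\lim_i h_i)$ without spelling out that identification.
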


If $\rho$ is irreducible and $A_1,\ldots,A_L$ are invertible, then we have $\mcl{K}=\hil$, which means that the deep model~\eqref{eq:deep_model} has universality.
The following lemmas are derived using Lemmas~\ref{lem:schur_original} and \ref{lem:double_commutant}.
\begin{lemma}\label{lem:schur}
Assume $\rho$ is irreducible.
Let $\alg=\{\sum_{i=1}^nc_i\rho(g_i)\,\mid\,n\in\mathbb{N}, g_i\in G, c_i\in\mathbb{C}\}$. 
Then, $\alg$ is dense in $B(\hil)$ with respect to the strong operator topology.
\end{lemma}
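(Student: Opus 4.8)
The plan is to recognize $\alg$ as the linear span of $\rho(G)$, compute its commutant by Schur's lemma, and then convert the resulting commutant identity into a statement about the strong-operator closure via the double commutant theorem. The whole argument is algebraic together with one point of topological care, so I would organize it around those two cited lemmas.

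First I would verify that $\alg$ is a unital $*$-subalgebra of $B(\hil)$. Since $\rho$ is a representation, $\rho(g_1)\rho(g_2)=\rho(g_1g_2)$, so the product of two finite linear combinations of the $\rho(g)$ is again such a combination; together with the obvious linearity this shows $\alg$ is a subalgebra. Because $\rho$ is unitary, $\rho(g)^*=\rho(g^{-1})\in\alg$, so $\alg$ is $*$-closed. Finally, evaluating $\rho$ at the group identity $e$ gives $\rho(e)=\rho(e)^2$ with $\rho(e)$ unitary, hence $\rho(e)=I\in\alg$, so $\alg$ contains the identity.

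Next I would compute the commutant. An operator $A\in B(\hil)$ commutes with every element of $\alg$ if and only if it commutes with each generator $\rho(g)$, since commutation is linear in its second argument; thus $\alg'$ equals the commutant of $\rho(G)$. As $\rho$ is irreducible, Schur's lemma (Lemma~\ref{lem:schur_original}) gives $\alg'=\mathbb{C}I$. Taking the commutant once more, every bounded operator commutes with scalar multiples of the identity, so $\alg''=(\mathbb{C}I)'=B(\hil)$. To apply the double commutant theorem (Lemma~\ref{lem:double_commutant}), which is stated for norm-closed algebras, I would pass to the norm closure $\overline{\alg}$, a norm-closed $*$-subalgebra; since the commutant of a set is unchanged under norm closure, $(\overline{\alg})''=\alg''=B(\hil)$, and the theorem then identifies $(\overline{\alg})''$ with the strong-operator closure of $\overline{\alg}$.

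The one step needing care—and the place I would be most careful rather than the place I expect real difficulty—is reconciling the three relevant topologies. Because the norm topology is finer than the strong operator topology, $\overline{\alg}$ lies inside the strong-operator closure of $\alg$, so the strong-operator closures of $\alg$ and of $\overline{\alg}$ coincide. Combining this with $(\overline{\alg})''=B(\hil)$ yields that $\alg$ itself is strong-operator dense in $B(\hil)$, as claimed. Everything else is bookkeeping once the containment $I\in\alg$ and the invariance of the commutant under norm closure are noted.
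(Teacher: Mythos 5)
Your proof is correct and follows essentially the same route as the paper: identify the commutant of $\alg$ as $\mathbb{C}I$ via Schur's lemma, deduce $\alg''=B(\hil)$, and invoke the von Neumann double commutant theorem to identify the double commutant with the strong-operator closure. Your version is somewhat more careful than the paper's (you verify that $\alg$ is a unital $*$-subalgebra and pass to the norm closure so that the double commutant theorem applies as stated, whereas the paper applies it directly to the SOT closure), but the underlying argument is the same.
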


\begin{lemma}\label{lem:dense}
Assume $\rho$ is irreducible and $A_1,\ldots, A_{L-1}$ are invertible.
Then, $\mcl{K}=\overline{\mcl{K}_0}=\hil$.
\end{lemma}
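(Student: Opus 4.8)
The plan is to show that $\mcl{K}=\overline{\mcl{K}_0}$ already contains the closed linear span of a single orbit $\{\rho(g_1)u_0:g_1\in G\}$ for a well-chosen nonzero $u_0\in\hil$, and then to use irreducibility to conclude that one such orbit already spans all of $\hil$. Throughout I assume the standing nondegeneracy $v\neq 0$ (if $v=0$, then $\mcl{K}_0=\{0\}$ and the claim fails), so that the model is nontrivial.

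The crucial ingredient is the cyclic-vector fact: for every nonzero $u\in\hil$ we have $\overline{\operatorname{span}}\{\rho(g)u:g\in G\}=\hil$. To see this, set $M=\overline{\operatorname{span}}\{\rho(g)u:g\in G\}$. Because $\rho(g)\rho(h)u=\rho(gh)u$ and each $\rho(g)$ is bounded, $M$ is a closed subspace with $\rho(g)M\subseteq M$ for all $g\in G$; irreducibility then forces $M\in\{\{0\},\hil\}$, and since $u\in M$ and $u\neq 0$ we conclude $M=\hil$. Equivalently, this follows from Lemma~\ref{lem:schur}: if $w$ is orthogonal to $\rho(g)u$ for all $g$, then $\bracket{Au,w}=0$ for every $A\in\alg$, and the strong-operator density of $\alg$ in $B(\hil)$ yields $\bracket{Tu,w}=0$ for every $T\in B(\hil)$; choosing a bounded $T$ with $Tu=w$ gives $w=0$, so $M^{\perp}=\{0\}$.

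It then remains to produce $u_0$. Fix any $g_2,\ldots,g_L\in G$ and set $u_0=A_1\rho(g_2)A_2\cdots A_{L-1}\rho(g_L)v$. Since each $\rho(g_l)$ is unitary (hence injective) and each of $A_1,\ldots,A_{L-1}$ is invertible (hence injective), applying these maps to the nonzero vector $v$ one layer at a time keeps the result nonzero, so $u_0\neq 0$. For every $g_1\in G$ we have $\rho(g_1)u_0=\tilde{\phi}(g_1,g_2,\ldots,g_L)\in\mcl{K}_0$, so $\overline{\operatorname{span}}\{\rho(g_1)u_0:g_1\in G\}\subseteq\mcl{K}$; applying the cyclic-vector fact to $u_0$ shows this span equals $\hil$, whence $\mcl{K}=\hil$.

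The main obstacle is the cyclic-vector fact itself, i.e. converting the abstract irreducibility hypothesis into the explicit density of a single orbit's span; this is exactly where the representation-theoretic input enters (the definition of irreducibility directly, or equivalently Lemma~\ref{lem:schur} and, behind it, Schur's lemma and the double commutant theorem). Everything else is soft: only injectivity of the $A_l$ together with $v\neq 0$ is needed to guarantee $u_0\neq 0$, so the full invertibility assumption is stronger than the argument strictly requires but makes this nondegeneracy step automatic.
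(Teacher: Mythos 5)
Your proof is correct, but it takes a genuinely different and more economical route than the paper's. The paper writes an arbitrary $h\in\hil$ as $h=Bv$ for some $B\in B(\hil)$ and then runs a layer-by-layer telescoping approximation: using the strong-operator density of $\alg$ in $B(\hil)$ (Lemma~\ref{lem:schur}, hence Schur plus the double commutant theorem), it approximates $A_{l-1}^{-1}$ and finally $B$ at the relevant vectors by finite combinations $\sum_i c_i\rho(g_i)$, so that $\tilde{A}_1A_1\cdots\tilde{A}_{L-1}A_{L-1}\tilde{A}_Lv\in\mcl{K}_0$ lands within $\varepsilon(1+\Vert BA_1\Vert+\cdots+\Vert BA_{L-1}\Vert)$ of $h$; this is where the bounded inverses $A_l^{-1}$ are genuinely used. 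You instead freeze $g_2,\ldots,g_L$, observe that $u_0=A_1\rho(g_2)\cdots A_{L-1}\rho(g_L)v\neq 0$, and invoke the fact that every nonzero vector is cyclic for an irreducible representation, so the single orbit $\{\rho(g_1)u_0\}$ already spans $\hil$ inside $\mcl{K}$. Your version needs only the definition of irreducibility (closed invariant subspaces are trivial) rather than the full operator-algebraic machinery, avoids the telescoping error estimate entirely, and, as you note, requires only injectivity of the $A_l$ rather than invertibility --- a strictly weaker hypothesis. Both arguments share the same implicit nondegeneracy assumption $v\neq 0$ (the paper needs it to write $h=Bv$), which you are right to flag. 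One small point of care: the paper's stated definition of irreducibility omits the word ``closed,'' but the standard (closed-subspace) reading is clearly intended and is what both your direct argument and your fallback via Lemma~\ref{lem:schur} use, so there is no gap.
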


\section{Rademacher Complexity Bound}\label{sec:rademacher}
We apply the isomorphism in Proposition~\ref{prop:isomorphism} to derive a Rademacher complexity bound with the aid of the reproducing property in the RKHS $\mcl{R}_k$.
If $p_{c,x}\in \mcl{K}$, Eq.~\eqref{eq:integral_form} implies $F_c(\cdot,x)=\iota(p_{c,x})\in\mcl{R}_k$ for $x\in\mcl{X}_0$ and $c>0$.
Thus, we can apply the reproducing property with respect to the model $F_c(\cdot,x)$.
Let $\Omega$ be a probability space equipped with a probability measure $P$.
Let $S\in\mathbb{N}$ be the sample size, $x_1,\ldots,x_S\in \mcl{X}_0$, and $\epsilon_1,\ldots,\epsilon_S:\Omega\to\mathbb{C}$ be i.i.d. Rademacher variables \red{(random variables following the uniform distribution on $\{-1,1\}$}).
For a measurable function $\epsilon:\Omega\to\mathbb{C}$, we denote by $\mr{E}[\epsilon]$ the integral $\int_{\Omega}\epsilon(\omega)\mr{d}P(\omega)$.
The empirical Rademacher complexity $\hat{R}(\mcl{F},x_1,\ldots,x_S)$ of a function class $\mcl{F}$ is defined as $\hat{R}(\mcl{F},x_1,\ldots,x_S)=\mr{E}[\sup_{F \in \mcl{F}} \sum_{s=1}^S F(x_s) \epsilon_s]/S$.
We denote by $\mcl{F}_c$ the function class $\{F_c(g_1,\ldots,g_L,\cdot)\,\mid\, g_1,\ldots,g_L\in G\}$.
The Rademacher complexity of $\mcl{F}_c$ is upper bounded as follows.

\begin{theorem}\label{thm:rademacher_unitary}
Assume $p_{c,x}\in\mcl{K}$ for $x\in \mcl{X}_0$. Then, the Rademacher complexity of the function class $\mcl{F}_c$ is bounded as
\begin{align*}
\hat{R}(\mcl{F}_c,x_1,\ldots,x_S)\le\frac{\Vert A_1\Vert\cdots \Vert A_{L-1}\Vert\Vert v\Vert\red{E(c)}}{\sqrt{S}}.
\end{align*} 
\end{theorem}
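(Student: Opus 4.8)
The plan is to collapse the Rademacher sum into a single inner product in $\hil$ and then separate the contribution of the parameters from that of the data. For fixed signs $\epsilon_1,\dots,\epsilon_S$ and parameters $\bg=(g_1,\dots,g_L)$, I would write $F_c(\bg,x_s)=\bracket{\tilde{\phi}(\bg),p_{c,x_s}}_{\hil}$ and use linearity of $\bracket{\cdot,\cdot}_{\hil}$ in its second argument to obtain
\begin{align*}
\sum_{s=1}^S F_c(\bg,x_s)\,\epsilon_s=\bracket{\tilde{\phi}(\bg),\ \sum_{s=1}^S\epsilon_s p_{c,x_s}}_{\hil}.
\end{align*}
The same identity can be read off from the reproducing property, since $p_{c,x}\in\mcl{K}$ makes $F_c(\cdot,x)=\iota(p_{c,x})\in\mcl{R}_k$ and $\iota$ is isometric by Proposition~\ref{prop:isomorphism}. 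Cauchy--Schwarz then bounds the modulus by $\Vert\tilde{\phi}(\bg)\Vert\cdot\Vert\sum_s\epsilon_s p_{c,x_s}\Vert$; the first factor will be controlled uniformly in $\bg$, and the second no longer depends on $\bg$, so it passes outside the supremum.

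The one genuinely structural step is the uniform bound $\Vert\tilde{\phi}(\bg)\Vert\le\Vert A_1\Vert\cdots\Vert A_{L-1}\Vert\Vert v\Vert$, and this is exactly where unitarity of the representation is used. Since each $\rho(g_i)$ preserves the norm, I would peel the operators off from the left one at a time: $\Vert\rho(g_1)A_1\cdots\rho(g_L)v\Vert=\Vert A_1\rho(g_2)\cdots\rho(g_L)v\Vert\le\Vert A_1\Vert\,\Vert\rho(g_2)A_2\cdots\rho(g_L)v\Vert$, and iterating down to $\Vert\rho(g_L)v\Vert=\Vert v\Vert$ gives the claim. Crucially the right-hand side is independent of $\bg$, so it is also a bound on the supremum over the whole function class; this is the reason the learnable parameters $g_i$ do not appear in the final estimate.

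It then remains to handle the random factor. Taking expectations and applying Jensen's inequality, I would pass to the second moment $\mr{E}[\Vert\sum_s\epsilon_s p_{c,x_s}\Vert^2]$, expand it as $\sum_{s,t}\mr{E}[\overline{\epsilon_s}\epsilon_t]\bracket{p_{c,x_s},p_{c,x_t}}_{\hil}$, and use independence and mean-zero-ness of the Rademacher variables (with $\mr{E}[\vert\epsilon_s\vert^2]=1$) to discard the off-diagonal terms. What survives is $\sum_s\Vert p_{c,x_s}\Vert^2=S$, using $\Vert p_{c,x_s}\Vert=1$. Combining the factors yields $\hat{R}(\mcl{F}_c,x_1,\dots,x_S)\le\frac{1}{S}\Vert A_1\Vert\cdots\Vert A_{L-1}\Vert\Vert v\Vert\sqrt{S}$, which is the stated bound.

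I do not expect a serious obstacle: once the sum is written as an inner product, the argument is the standard RKHS Rademacher bound. The points needing care are the uniform norm bound above (the only place the algebraic and representation-theoretic structure really enters) and the bookkeeping for the complex Rademacher variables --- in particular interpreting the supremum through the modulus so that Cauchy--Schwarz applies, and confirming $\mr{E}[\overline{\epsilon_s}\epsilon_t]=\delta_{st}$ for the complex Rademacher law in use.
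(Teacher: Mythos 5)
Your proposal is correct and follows essentially the same argument as the paper's proof: rewrite the Rademacher sum as a single inner product, apply Cauchy--Schwarz, bound $\Vert\tilde{\phi}(\bg)\Vert$ uniformly by $\Vert A_1\Vert\cdots\Vert A_{L-1}\Vert\Vert v\Vert$ using unitarity of $\rho$, and finish with Jensen's inequality, independence of the Rademacher variables, and $\Vert p_{c,x_s}\Vert=1$. The only cosmetic difference is that you work directly in $\hil$ where $F_c(\bg,x)=\bracket{\tilde{\phi}(\bg),p_{c,x}}_{\hil}$ is available by definition, whereas the paper routes the same computation through $\mcl{R}_k$ via the reproducing property and the isometry $\iota$; the two are identical by Proposition~\ref{prop:isomorphism}.
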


\begin{remark}\label{rem:irreducible_assumption}
If $\rho$ is irreducible and $A_1,\ldots,A_{L-1}$ are invertible, then by Lemma~\ref{lem:dense}, the assumption of Theorem~\ref{thm:rademacher_unitary} is satisfied automatically.
\end{remark}
\begin{remark}
If $p_{c,x}(y)=(c/\pi)^{d/2}\mr{e}^{-c\Vert y-x\Vert^2}$, then we have $E(c)=({2c}/{\pi})^{d/2}$.
Combining with the discussion in Example~\ref{ex:dirac_delta}, we can see that there is a tradeoff between $F_c$ being close to the original model $f$ and the constant $E(c)$ becoming large.
\end{remark}
\color{black}
An important example of models that can be analyzed using this framework is invertible neural networks.

\subsection{Invertible neural networks}
Consider the same setting in Example~\ref{ex:affine}.
Note that since $\rho$ is irreducible, the assumption of Theorem~\ref{thm:rademacher_unitary} is satisfied in this case (see Remark~\ref{rem:irreducible_assumption}).
Let $nn(\bg,x)=v(W_L \sigma_{L-1}(W_{L-1} \cdots \sigma_1(W_1 x - W_1 b_1) \cdots - W_{L-1} b_{L-1}) - W_L b_L)$ be a neural network model.
Then, we have $f(\bg)=nn(\bg,\cdot)\vert \det W_1\vert^{1/2}\cdots \vert\det{W_L}\vert^{1/2}$.
Thus, we have $F_c(\bg,\cdot)=NN_c(\bg,\cdot)\vert\det W_1\vert^{1/2}\cdots \vert\det{W_L}\vert^{1/2}$, where $NN_c(\bg,x)=\int_{\mathbb{R}^d}nn(y)p_{c,x}(y)\mr{d}y$.
Let $D>0$ and $\mcl{NN}_c=\{NN_c(\bg,\cdot)\,\mid\, \bg\in G^L,\ \vert\det{W_1}\vert^{-1/2},\ldots,\vert\det{W_L}\vert^{-1/2}\le D\}$.
We assume $A_l$ is invertible for $l=1,\ldots,L-1$.
\begin{theorem}\label{thm:rademacher_nn_constnat}
 The Rademacher complexity bound of $\mcl{NN}_c$ is
\begin{align*}
 \hat{R}(\mcl{NN}_c,x_1,\ldots,x_S)
\le  \frac{\red{E(c)}\Vert v\Vert\prod_{l=1}^{L-1}\Vert A_l\Vert}{\sqrt{S}} \sup_{\vert\det{W_l}\vert^{-1/2}\le D}\prod_{l=1}^L\vert\det W_l\vert^{-1/2}.
\end{align*}
\end{theorem}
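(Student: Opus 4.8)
The plan is to reduce the statement to Theorem~\ref{thm:rademacher_unitary} by exploiting the pointwise identity $NN_c(\bg,\cdot)=F_c(\bg,\cdot)\prod_{l=1}^L\vert\det W_l\vert^{-1/2}$ established just above, carrying the scalar determinant factor through the reproducing-kernel argument that proves that theorem. First I would verify its hypothesis in the present setting: since $\rho$ is the irreducible affine representation of Example~\ref{ex:affine} and $A_1,\ldots,A_{L-1}$ are assumed invertible, Lemma~\ref{lem:dense} yields $\mcl{K}=\hil$, so $p_{c,x}\in\mcl{K}$ for every $x\in\mcl{X}_0$ (cf.\ Remark~\ref{rem:irreducible_assumption}) and hence $F_c(\cdot,x)=\iota(p_{c,x})\in\mcl{R}_k$.

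Next, for a fixed realization of the Rademacher variables, I would pull the $s$-independent scalar out of the sum,
\begin{align*}
\sum_{s=1}^S NN_c(\bg,x_s)\epsilon_s=\Big(\prod_{l=1}^L\vert\det W_l\vert^{-1/2}\Big)\sum_{s=1}^S F_c(\bg,x_s)\epsilon_s,
\end{align*}
and then use the reproducing property $F_c(\bg,x_s)=\bracket{\phi(\bg),F_c(\cdot,x_s)}_{\mcl{R}_k}$ together with Cauchy--Schwarz in $\mcl{R}_k$ to bound $\sum_s F_c(\bg,x_s)\epsilon_s$ by $\Vert\phi(\bg)\Vert_{\mcl{R}_k}\,\Vert\sum_s\epsilon_s F_c(\cdot,x_s)\Vert_{\mcl{R}_k}$, exactly as in the proof of Theorem~\ref{thm:rademacher_unitary}. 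Since $\Vert\phi(\bg)\Vert_{\mcl{R}_k}^2=k(\bg,\bg)=\Vert\tilde{\phi}(\bg)\Vert_{\hil}^2$ and every $\rho(g_l)$ is unitary, this gives the $\bg$-independent bound $\Vert\phi(\bg)\Vert_{\mcl{R}_k}\le\Vert A_1\Vert\cdots\Vert A_{L-1}\Vert\Vert v\Vert$.

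The decisive step is taking the supremum over the constraint set $\{\bg:\vert\det W_l\vert^{-1/2}\le D,\ l=1,\ldots,L\}$. After Cauchy--Schwarz the only remaining $\bg$-dependence sits in the nonnegative scalar $\prod_l\vert\det W_l\vert^{-1/2}$ and in $\Vert\phi(\bg)\Vert_{\mcl{R}_k}$ (already dominated by the constant $\Vert A_1\Vert\cdots\Vert A_{L-1}\Vert\Vert v\Vert$), whereas $\Vert\sum_s\epsilon_s F_c(\cdot,x_s)\Vert_{\mcl{R}_k}$ does not depend on $\bg$ at all; since all these quantities are nonnegative, the supremum of the product factorizes into $\sup_{\vert\det W_l\vert^{-1/2}\le D}\prod_l\vert\det W_l\vert^{-1/2}$ times the two constant factors. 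Finally I would take the expectation over $\epsilon$ and, using the isometry $\iota$ of Proposition~\ref{prop:isomorphism}, independence, and $\Vert p_{c,x_s}\Vert=1$, estimate $\mr{E}[\Vert\sum_s\epsilon_s F_c(\cdot,x_s)\Vert_{\mcl{R}_k}]=\mr{E}[\Vert\sum_s\epsilon_s p_{c,x_s}\Vert_{\hil}]\le\big(\sum_s\Vert p_{c,x_s}\Vert^2\big)^{1/2}=\sqrt{S}$; dividing by $S$ yields the claimed bound.

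The step requiring the most care is this factorization of the supremum. A tempting shortcut that first replaces $\prod_l\vert\det W_l\vert^{-1/2}$ by its supremum and only afterwards takes $\sup_{\bg}\sum_s F_c(\bg,x_s)\epsilon_s$ is not justified, because the determinant weight and the possibly negative inner sum are coupled through the same $\bg$; such a split would be valid only if one knew $\sup_{\bg}\sum_s F_c(\bg,x_s)\epsilon_s\ge 0$ for each realization of $\epsilon$, which is not transparent. Applying Cauchy--Schwarz before the supremum removes the coupling, since it replaces the signed quantity $\sum_s F_c(\bg,x_s)\epsilon_s$ by a product of nonnegative factors, namely a uniformly bounded norm $\Vert\phi(\bg)\Vert_{\mcl{R}_k}$ and a $\bg$-free norm, so that sup-of-product $\le$ product-of-sups applies legitimately. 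I would therefore present this reordering as the crux of the argument rather than treating the supremum split as routine.
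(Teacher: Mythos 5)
Your proposal is correct and follows essentially the same route as the paper: reduce to Theorem~\ref{thm:rademacher_unitary} via the identity $NN_c(\bg,\cdot)=F_c(\bg,\cdot)\prod_{l=1}^L\vert\det W_l\vert^{-1/2}$ and carry the nonnegative determinant factor through the reproducing-property/Cauchy--Schwarz argument. The paper's own proof performs the supremum factorization in a single line without comment; your observation that Cauchy--Schwarz must be applied \emph{before} splitting the supremum (so that the $\bg$-dependent part is a product of nonnegative factors) is exactly the justification the paper's terse final inequality implicitly relies on, so your version is the same argument made rigorous.
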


For example, if $\sigma_l$ is the elementwise Leaky ReLU, then $\Vert A_l\Vert$ is bounded as Lemma~\ref{lem:Leaky_relu}.
Since $\det{W_l}$ is the product of the singular values of $W_l$, and it is in the denominator of the bound, Theorem~\ref{thm:rademacher_unitary} implies that the model can generalize well even if $W_l$
has large singular values.

\section{Generalization to Non-constant Width Neural Networks}\label{sec:non_constant}
\subsection{Algebraic representation of deep models with Koopman operators}\label{subsec:deep_model_general}
In pervious sections, we focused on a single Hilbert space $\hil$ and consider operators on $\hil$.
This corresponds to considering a neural network with a constant width.
In addition, $\hil$ is determined by the group representation, which forces us to consider a certain data space such as $\mathbb{R}^d$.
However, in general, the width is not always constant.
In addition, the data space is bounded in many cases.
To meet this situation, we consider multiple Hilbert spaces $\hil_0,\ldots,\hil_{L-1},\tilde{\hil}_1,\ldots,\tilde{\hil}_L$.
Let $\Theta_l$ be a set of parameters and let $\eta_l:\Theta_l\to B(\tilde{\hil}_l,\hil_{l-1})$ for $l=1,\ldots,L$.
In addition, let $v\in \tilde{\hil}_L$ and $A_l\in B(\hil_l,\tilde{\hil}_l)$ be fixed.
Consider the model
\begin{align*}
f(\theta_1,\ldots,\theta_L)=\eta_1(\theta_1)A_1\eta_2(\theta_2)\cdots A_{L-1}\eta_L(\theta_L)v,
\end{align*}
where $\theta_l\in\Theta_l$ for $l=1,\ldots,L$.

In the same manner as Subsection~\ref{subsec:model_deep}, we consider a regularized model
\begin{align*}
F_c(\theta_1,\ldots,\theta_L,x)=\bracket{\eta_1(\theta_1)A_1\eta_2(\theta_2)\cdots A_{L-1}\eta_L(\theta_L)v,p_{c,x}}_{\hil_0},
\end{align*}
where $p_{c,x}\in\hil_0$ for $x\in\mcl{X}_0$ and $c>0$ with \red{$\Vert p_{c,x}\Vert\le E(c)$ for $E(c)>0$}.
We also define a positive definite kernel $k: (\Theta_1 \times \cdots \times \Theta_L) \times (\Theta_1 \times \cdots \times \Theta_L) \to \mathbb{C}$ to construct an RKHS to analyze the deep model~\eqref{eq:deep_model}:
\begin{align*}
k((\theta_1, \ldots, \theta_L), (\tilde{\theta}_1, \ldots, \tilde{\theta}_L)) = \langle \eta_1(\theta_1) A_1 \cdots A_{L-1} \eta_L(\theta_L) v, \eta_1(\tilde{\theta}_1) A_1 \cdots A_{L-1} \eta_L(\tilde{\theta}_L) v \rangle_{\hil_0}.
\end{align*}
We set $\mcl{R}_k$ and $\mcl{K}$ in the same manner as in Subsection~\ref{subsec:rkhs_deep}.
This generalization allows us to derive Rademacher complexity bounds for a wide range of models.

\subsection{Neural network with injective weight matrices}\label{subsec:injective_nn}
Let $d_l\in\mathbb{N}$ and $\Theta_l=\{W\in\mathbb{C}^{d_{l}\times d_{l-1}}\,\mid\, W\mbox{ is injective}\}$. 
Let $\mcl{X}_0\subset\mathbb{R}^{d_0}$, $W_l\sigma_{l-1}(W_{l-1}\cdots W_2\sigma_1(W_1\mcl{X}_0))\subseteq \tilde{\mcl{X}_l}\subseteq \mathbb{R}^{d_l}$, and $\sigma_{l}(W_l\cdots W_2\sigma_1(W_1\mcl{X}_0))\subseteq \mcl{X}_l\subseteq \mathbb{R}^{d_l}$ that satisfy $\mu_{\mathbb{R}^{d_l}}(\mcl{X}_l)>0$ and $\mu_{\mathbb{R}^{d_l}}(\tilde{\mcl{X}}_l)>0$.
\begin{figure}[t]
    \centering
    \includegraphics[width=0.7\linewidth]{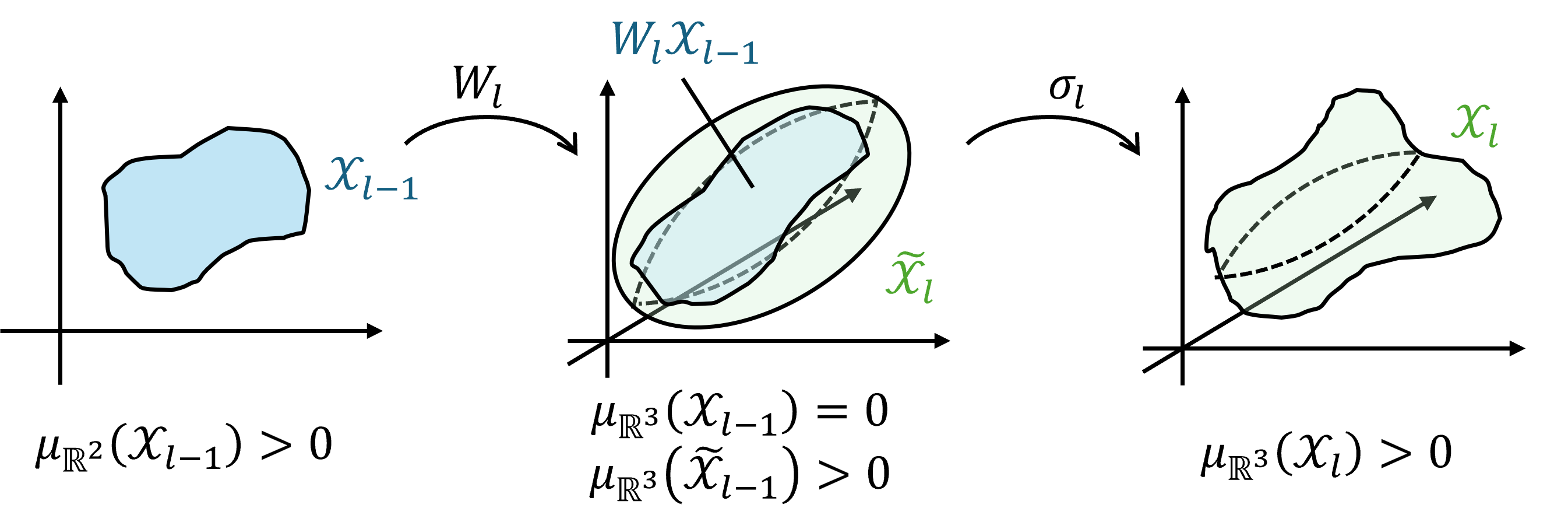}\vspace{-.5cm}
    \caption{Construction of $\mcl{X}_l$ and $\tilde{X}_l$}
    \label{fig:X_l}
    \end{figure}
\red{Starting from $\mcl{X}_0$, we recurrently construct $\tilde{\mcl{X}}_l$ and $\mcl{X}_l$ for $l=1,\ldots,L$.
Since $W_l$ is injective, the space $W_l\mcl{X}_{l-1}$ is $d_{l-1}$-dimensional.
If $d_l>d_{l-1}$, then the measure $\mu_{\mathbb{R}^{d_l}}(W_l\mcl{X}_{l-1})$ becomes $0$, and setting $\tilde{\mcl{X}_l}=W_l\mcl{X}_{l-1}$ makes the analysis meaningless.
Thus, we set a space $\mcl{X}_l$ that includes $W_l\mcl{X}_{l-1}$ and $\mu_{\mathbb{R}^{d_l}}(W_l\mcl{X}_{l-1})>0$.
Figure~\ref{fig:X_l} schematically shows the construction of $\mcl{X}_l$ and $\tilde{X}_l$.}
Let $\tilde{\hil}_l=L^2(\tilde{\mcl{X}}_l)$, $\hil_l=L^2(\mcl{X}_{l})$, and $\eta_l(W_l)=K_{W_l}$ be the Koopman operator from $\tilde{\hil}_l$ to $\hil_{l-1}$ with respect to $W_l$.
In addition, let $A_l=K_{\sigma_l}$ be the Koopman operator from $\hil_l$ to $\tilde{\hil}_l$ with respect to an activation function $\sigma_l:\tilde{\mcl{X}}_l\to \mcl{X}_l$ that satisfies Assumption~\ref{assum: boundedness_koopman}.
Then, we have
\begin{align*}
f(W_1,\ldots,W_L)(x)=v(W_L\sigma_{L-1}(W_{L-1}\sigma_{L-2}(\cdots \sigma_1(W_1x)))). 
\end{align*}

Let $D>0$ and $\mcl{F}_c=\{F_c(\theta_1,\ldots,\theta_L,\cdot)\,\mid\,\vert\det{W_1^*W_1}\vert^{-1/4},\ldots,\vert\det{W_L^*W_L}\vert^{-1/4}\le D\}$.
Let 
\begin{align*}
\alpha(h)=\bigg(\frac{\int_{W_l\mcl{X}_{l-1}} |h(x)|^2 d\mu_{\mcl{R}(W_l)}(x)}{\int_{\tilde{\mcl{X}}_l}\vert h(x)\vert^2\mr{d}\mu_{\mathbb{R}^{d_l}}(x)}\bigg)^{1/2}
\end{align*}
for $h\in\tilde{\hil}_l$.
This value depends on how large we set $\tilde{\mcl{X}_l}$ compared with $W_l\mcl{X}_{l-1}$, and by setting $\tilde{\mcl{X}_l}$ sufficiently large, we can bound it by $1$ with a reasonable assumption (see Remark~\ref{rem:alpha} for more details).
In the same way as in Theorem~\ref{thm:rademacher_unitary}, we obtain the following bound.
\begin{theorem}\label{thm:rademacher_injective}
Assume $p_{c,x}\in\mcl{K}$ for $x\in \mcl{X}_0$.
Let $f_l=v\circ W_L\circ \sigma_{L-1}\circ \cdots \circ W_{l+1}\circ \sigma_l$.
Then, we have
\begin{align}
\hat{R}(\mcl{F}_c,x_1,\ldots,x_S)\le \sup_{\vert\det{W_l^*W_l}\vert^{-1/4}\le D}\frac{\red{E(c)}\Vert v\Vert\prod_{l=1}^{L-1}\Vert A_l\Vert \alpha(f_l)}{\sqrt{S}\prod_{l=1}^L\vert\det W_l^*W_l\vert^{1/4}},\label{eq:bound_injective}
\end{align}
\end{theorem}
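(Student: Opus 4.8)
The plan is to run the same RKHS argument used for Theorem~\ref{thm:rademacher_unitary}, thereby reducing the bound to an estimate of the feature norm $\Vert\tilde{\phi}(\mathbf{W})\Vert_{\hil_0}=\sqrt{k(\mathbf{W},\mathbf{W})}$, and then to replace the unitarity of $\rho(g_l)$ exploited there by an explicit change-of-variables computation for the linear Koopman operators $K_{W_l}$. Since $p_{c,x}\in\mcl{K}$, we have $F_c(\cdot,x)=\iota(p_{c,x})\in\mcl{R}_k$, so the reproducing property applies to each $F_c(\cdot,x_s)$. Exactly as in the proof of Theorem~\ref{thm:rademacher_unitary}, I would write $\sum_{s}\epsilon_s F_c(\mathbf{W},x_s)=\bracket{\phi(\mathbf{W}),\sum_s\epsilon_s F_c(\cdot,x_s)}_{\mcl{R}_k}$, apply Cauchy--Schwarz, and use $\Vert\phi(\mathbf{W})\Vert_{\mcl{R}_k}=\sqrt{k(\mathbf{W},\mathbf{W})}=\Vert\tilde{\phi}(\mathbf{W})\Vert_{\hil_0}$. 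Independence of the Rademacher variables together with the isometry of $\iota$ and $\Vert p_{c,x_s}\Vert=1$ gives $\mr{E}[\Vert\sum_s\epsilon_s F_c(\cdot,x_s)\Vert_{\mcl{R}_k}]\le(\sum_s\Vert p_{c,x_s}\Vert^2)^{1/2}=\sqrt{S}$. This reduces the claim to bounding $\Vert\tilde{\phi}(\mathbf{W})\Vert_{\hil_0}$ for each fixed admissible $\mathbf{W}$, after which the supremum over $\vert\det W_l^*W_l\vert^{-1/4}\le D$ yields \eqref{eq:bound_injective}.

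The heart of the proof is a norm identity for a single linear Koopman operator. For injective linear $W_l$ and $h\in\tilde{\hil}_l$ I would establish
\begin{align*}
\Vert K_{W_l}h\Vert_{\hil_{l-1}}^2=\int_{\mcl{X}_{l-1}}\vert h(W_lx)\vert^2\,\mr{d}\mu_{\mathbb{R}^{d_{l-1}}}(x)=\vert\det W_l^*W_l\vert^{-1/2}\int_{W_l\mcl{X}_{l-1}}\vert h(y)\vert^2\,\mr{d}\mu_{\mcl{R}(W_l)}(y),
\end{align*}
which is precisely $\Vert K_{W_l}h\Vert_{\hil_{l-1}}=\vert\det W_l^*W_l\vert^{-1/4}\alpha(h)\Vert h\Vert_{\tilde{\hil}_l}$. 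I would obtain the second equality from the area formula: the map $x\mapsto W_lx$ parametrizes the $d_{l-1}$-dimensional subspace $\mcl{R}(W_l)$ with constant Jacobian $\sqrt{\det W_l^*W_l}$, so Lebesgue measure on $\mathbb{R}^{d_{l-1}}$ pushes forward to the surface measure $\mu_{\mcl{R}(W_l)}$ rescaled by that factor.

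Given this identity, the bound follows by telescoping over the layers. Writing $\tilde{\phi}(\mathbf{W})=K_{W_1}f_1$ and $f_l=A_lK_{W_{l+1}}f_{l+1}$ (with $f_{L-1}=A_{L-1}K_{W_L}v$), I would alternately apply the identity of the previous paragraph to peel off each $K_{W_l}$ — each contributing a factor $\vert\det W_l^*W_l\vert^{-1/4}$ and an $\alpha$-ratio of its argument, namely $\alpha(f_l)$ — with the submultiplicativity $\Vert A_lK_{W_{l+1}}f_{l+1}\Vert_{\tilde{\hil}_l}\le\Vert A_l\Vert\,\Vert K_{W_{l+1}}f_{l+1}\Vert_{\hil_l}$, each contributing $\Vert A_l\Vert$. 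The string terminates at $K_{W_L}v$, contributing $\Vert v\Vert$ and the last determinant factor. Collecting the $L$ determinant factors, the $L-1$ operator norms, and the ratios $\alpha(f_l)$ assembles the product on the right-hand side of \eqref{eq:bound_injective}.

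The delicate point is the norm identity: making the change of variables rigorous for an injective, generally non-square $W_l$ (with $W_l^*$ the conjugate transpose in the complex formulation), where the image is a lower-dimensional subspace and Lebesgue measure must be replaced by the surface measure $\mu_{\mcl{R}(W_l)}$ with Jacobian $\sqrt{\det W_l^*W_l}$; one natural route is to reduce to the diagonal case through the singular value decomposition of $W_l$. One must also be careful with the bookkeeping of the three families of domains $\mcl{X}_{l-1}$, $\tilde{\mcl{X}}_l$, and $W_l\mcl{X}_{l-1}$ and with the differing dimensions of the measures entering $\alpha$, which are exactly what makes $\alpha(f_l)\neq 1$ in general. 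Once the identity is in hand, the telescoping and the Rademacher reduction are routine.
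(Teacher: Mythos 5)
Your proposal is correct and follows essentially the same route as the paper: reuse the RKHS/reproducing-property reduction from Theorem~\ref{thm:rademacher_unitary} with $\rho(g_l)$ replaced by $K_{W_l}$, establish $\Vert K_{W_l}h\Vert^2=\alpha(h)^2\Vert h\Vert^2/\vert\det W_l^*W_l\vert^{1/2}$ by the change of variables onto $\mcl{R}(W_l)$, and telescope through the layers. You merely spell out (via the area formula/SVD and the explicit peeling $f_l=A_lK_{W_{l+1}}f_{l+1}$) steps the paper leaves implicit.
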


As for $\Vert A_l\Vert$, since $A_l=K_{\sigma_l}$, we can evaluate the upper bound of $\Vert A_l\Vert$ by Lemma~\ref{lem:koopman_bounded}.
For example, if $\mcl{X}_0$ is bounded, we can apply Lemma~\ref{lem:sigmoid_tanh} to the sigmoid and hyperbolic tangent.

\begin{remark}
For simplicity, we consider models without bias terms.
We obtain the same result for models with bias terms since the norm of the Koopman operator with respect to the shift function is $1$.
\end{remark}

\begin{remark}\label{rem:alpha}
Assume there exist $a,b>0$ such that $a\le \vert f_l(x)\vert^2\le b$.
We set $\tilde{\mcl{X}}_l$ sufficiently large so that $b\cdot \mu_{\mcl{R}(W_l)}(W_l\mcl{X}_{l-1})\le a\cdot\mu_{\mathbb{R}^{d_l}}(\tilde{\mcl{X}}_l)$.
Then, we have
\begin{align*}
\alpha(f_l)^2=\frac{\int_{W_l\mcl{X}_{l-1}}|f_l(x)|^2 \mr{d}\mu_{\mcl{R}(W_l)}(x)}{\int_{\tilde{\mcl{X}}_l} |f_l(x)|^2 d\mu_{\mathbb{R}^{d_l}}(x)}
\le \frac{b\cdot\mu_{\mcl{R}(W_l)}(W_l\mcl{X}_{l-1})}{a\cdot\mu_{\mathbb{R}^{d_l}}(\tilde{\mcl{X}}_l)}\le 1.
\end{align*}
\end{remark}

\begin{remark}\label{rem:tradeoff}
There is a tradeoff between the magnitudes of the denominator and the numerator of the bound~\eqref{eq:bound_injective}.
When $\sigma_l(x)$ tends to be constant as $\Vert x\Vert\to\infty$, such as the hyperbolic tangent and sigmoid, the derivative of $\sigma_l^{-1}(x)$ tends to be large as the magnitude of $\Vert x\Vert$ becomes large.
In this case, according to Lemma~\ref{lem:koopman_bounded}, if $\det{W_l}$ is large, then $\Vert A_l\Vert$ is also large since the volume of $\mcl{X}_l$ becomes large.
The activation function plays a significant role in increasing the complexity in this case.
When $\sigma_1,\ldots,\sigma_{L-1}$ are unbounded, such as the Leaky ReLU, $\tilde{\mcl{X}}_L$ becomes large if $\det{W_1},\ldots,\det{W_L}$ are large, which makes $\Vert v\Vert$ large.
The final nonlinear transformation $v$ plays a significant role in increasing the complexity in this case.
\end{remark}

\paragraph{Advantage over existing Koopman-based bounds}
\citet{hashimoto2024koopmanbased} proposed Rademacher complexity bounds using Koopman operator norms. 
Since the norm is defined by the Sobolev space, the framework accepts only smooth and unbounded activation functions.
In addition, although they include factors of the norms of Koopman operators with respect to the activation functions, their evaluation is extremely challenging, making the effect
of the activation function unclear.
On the other hand, our bound can be applied to various types of activation functions, such as the hyperbolic tangent, sigmoid, and Leaky ReLU, we can evaluate the Koopman operator norms using Lemmas~\ref{lem:koopman_bounded} -- \ref{lem:Leaky_relu}, and we can understand the effect of the activation function as discussed in Remark~\ref{rem:tradeoff}.

\if0
\begin{remark}

Assume $\vert f_l\vert^2$ is Lipschitz continuous.
Let $\delta>0$ there exists $L>0$ such that if $\Vert x-y\Vert \le \delta$, then $\vert \vert f_l(x)\vert^2-\vert f_l(y)\vert^2\vert\le L\delta$.
Let $\mcl{Y}_l= \{y\in\mcl{R}(W_l)^{\perp}\,\mid\, \Vert y\Vert \le \delta\}$ be the ball with radius $\delta$. 
We set $\tilde{\mcl{X}}_l=\{x+y\,\mid\,x\in W_l\mcl{X}_{l-1},\ y\in \mcl{Y}_l \}$.
Then, we have
\begin{align*}
\int_{\tilde{\mcl{X}}_l} |f_l(x)|^2 d\mu_{\mathbb{R}^{d_l}}(x)
&=\int_{W_l\mcl{X}_{l-1}}\int_{\mcl{Y}_l} |f_l(x+y)|^2 \mr{d}\mu_{\mcl{R}(W_l)^{\perp}}(y)\mr{d}\mu_{\mcl{R}(W_l)}(x) \\
&\ge \int_{W_l\mcl{X}_{l-1}}\int_{\mcl{Y}_l} (|f_l(x)|^2-L\delta) \mr{d}\mu_{\mcl{R}(W_l)^{\perp}}(y)\mr{d}\mu_{\mcl{R}(W_l)}(x)\\
&= \mu_{\mcl{R}(W_l)^{\perp}}(\mcl{Y}_l)\int_{W_l\mcl{X}_{l-1}}(|f_l(x)|^2-L\delta) \mr{d}\mu_{\mcl{R}(W_l)}(x).\\
&=\mu_{\mcl{R}(W_l)^{\perp}}(\mcl{Y}_l)\bigg(\int_{W_l\mcl{X}_{l-1}}|f_l(x)|^2 \mr{d}\mu_{\mcl{R}(W_l)}(x)-L\delta\mu_{\mcl{R}(W_l)}(W_l\mcl{X}_{l-1})\bigg). 
\end{align*}
Thus, we have
\begin{align*}
\alpha(f_l)\le \frac{1}{\mu_{\mcl{R}(W_l)^{\perp}}(\mcl{Y}_l)}\frac{\int_{W_l\mcl{X}_{l-1}}|f_l(x)|^2 \mr{d}\mu_{\mcl{R}(W_l)}(x)}{\int_{W_l\mcl{X}_{l-1}}|f_l(x)|^2 \mr{d}\mu_{\mcl{R}(W_l)}(x)-L\delta\mu_{\mcl{R}(W_l)}(W_l\mcl{X}_{l-1})}
\end{align*}
Since $\mu_{\mcl{R}(W_l)^{\perp}}(\mcl{Y}_l)=(\sqrt{\pi}\delta)^{\tilde{d}_l}/(\tilde{d}_l/2)!$, where $\tilde{d}_l=\opn{dim}(\mcl{R}(W_l)^{\perp})$, the factor $\mu_{\mcl{R}(W_l)^{\perp}}(\mcl{Y}_l)$ becomes small as $\tilde{d}_l$ becomes large.
As a result, $\alpha(f_l)$ is expected to become larger as the dimension of $\mcl{R}(W_l)^{\perp}$ becomes larger.
\end{remark}
\fi


\subsection{General neural network}
If $W$ is not injective, the Koopman operator $K_W$ is unbounded.
Thus, instead of the standard Koopman operators, we consider weighted Koopman operators.
Let $d_l\in\mathbb{N}$ and $\Theta_l=\{W\in\mathbb{C}^{d_{l}\times d_{l-1}}\}$. 
For $l=0,\ldots,L-1$, let $\tilde{d}_l=\opn{dim}(\opn{ker}(W_{l+1}))$, $q_1,\ldots,q_{\tilde{d}_l}$ be an orthonormal basis of $\opn{ker}(W_{l+1})$, $q_{\tilde{d}_l+1},\ldots, q_{d_l}$ be an orthonormal basis of $\opn{ker}(W_{l+1})^{\perp}$, $\mcl{X}_l=\{\sum_{i=1}^{d_l}c_iq_i\,\mid\,c_i\in [a_i,b_i]\}$ for some $a_i<b_i$ such that $\sigma_l(W_l\cdots W_2\sigma_1(W_1\mcl{X}_0))\subseteq \mcl{X}_l$, $\mcl{Y}_l=\{\sum_{i=1}^{\tilde{d}_l}c_iq_i\,\mid\,c_i\in [a_i,b_i]\}$, and $\mcl{Z}_l=\{\sum_{i=\tilde{d}_l+1}^{d_l}c_iq_i\,\mid\,c_i\in [a_i,b_i]\}$.
Let $W_l\sigma_{l-1}(W_{l-1}\cdots W_2\sigma_1(W_1\mcl{X}_0))\subseteq \tilde{\mcl{X}_l}\subseteq \mathbb{R}^{d_l}$ satisfying $\mu_{\mathbb{R}^{d_l}}(\tilde{\mcl{X}}_l)>0$.
\begin{figure}[t]
    \centering
    \includegraphics[width=0.7\linewidth]{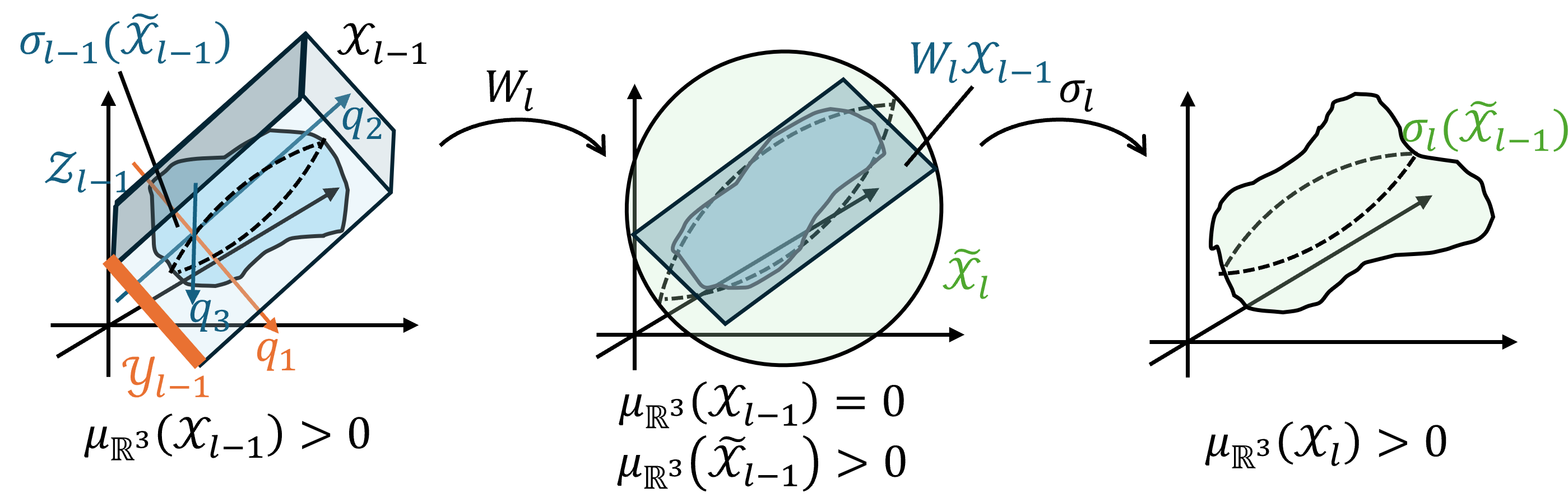}\vspace{-.5cm}
    \caption{Construction of $\mcl{X}_l$, $\mcl{Y}_l$, $\mcl{Z}_l$, and $\tilde{X}_l$}
    \label{fig:X_l_general}
\end{figure}
\red{In this case, to decompose the integral on $\opn{ker}(W_{l+1})$ and that on $\opn{ker}(W_{l+1})^{\perp}$, we set the orthonormal basis along $\opn{ker}(W_{l+1})$ and define $\mcl{Y}_l$ and $\mcl{Z}_l$.
Figure~\ref{fig:X_l_general} schematically shows the construction of $\mcl{X}_l$, $\mcl{Y}_l$, $\mcl{Z}_l$, and $\tilde{X}_l$.}
Let $\tilde{\hil}_l$ and $\hil_l$ be the same space as in Subsection~\ref{subsec:injective_nn}, and 
Let $\eta_l(W)=\tilde{K}_{\psi_l,W}$ be the weighted Koopman operator from $\tilde{\hil}_l$ to $\hil_{l-1}$ with respect to $W$ and $\psi_l$, where $\psi_l$ is defined as $\psi_l(x)=\psi_l(x_1)=1$ for $x\in\mcl{X}_{l-1}$, where $x=x_1+x_2$ with $x_1\in \mcl{Y}_{l-1}$ and $x_2\in \mcl{Z}_{l-1}$, and $\psi_l(x)=0$ for $x\notin \mcl{X}_{l-1}$. 
In addition, let $A_l$ be the same operator as in Subsection~\ref{subsec:injective_nn}.
Then, we have
\begin{align*}
f(W_1,\ldots,W_L)(x)=&\psi_1(x)\psi_2(\sigma_1(W_1x))\cdots\psi_L(\sigma_{L-1}(W_{L-1}\sigma_{L-2}(\cdots \sigma_1(W_1x))))\\
&\quad \cdot v(W_L\sigma_{L-1}(W_{L-1}\sigma_{L-2}(\cdots \sigma_1(W_1x)))). 
\end{align*}
The factor $\psi_1(x)\cdots \psi_L(\sigma_{L-1}(W_{L-1}\sigma_{L-2}(\cdots \sigma_1(W_1x))))$ is an auxiliary factor.
Since $\psi_l(x)=1$ for $x\in\mcl{X}_{l-1}$, we have $f(W_1,\ldots,W_L)(x)=v(W_L\sigma_{L-1}(W_{L-1}\sigma_{L-2}(\cdots \sigma_1(W_1x))))$ for $x\in\mcl{X}_0$ in the data space, exactly the same structure as that of neural networks.
Thus, we can regard $f$ as the original neural network $v(W_L\sigma_{L-1}(W_{L-1}\sigma_{L-2}(\cdots \sigma_1(W_1x))))$.

Let $D>0$ and $\mcl{F}_c=\{F_c(\theta_1,\ldots,\theta_L,\cdot)\,\mid\,\vert\det{W_1\vert_{\opn{ker}(W_1)^{\perp}}}\vert^{-1/2},\ldots,\vert\det{W_L\vert_{\opn{ker}(W_L)^{\perp}}}\vert^{-1/2}\le D\}$.
In the same way as in Theorem~\ref{thm:rademacher_injective}, we obtain the following bound.
\begin{theorem}\label{thm:bound_general}
Assume $p_{c,x}\in\mcl{K}$ for $x\in \mcl{X}_0$.
Then, we have
\begin{align*}
 \hat{R}(\mcl{F}_c,x_1,\ldots,x_S)&\le \sup_{\vert\det{W_l}\vert_{\opn{ker}(W_l)^{\perp}}\vert^{-1/2}\le D}\frac{\red{E(c)}\Vert v\Vert\prod_{l=1}^{L-1}\Vert A_l\Vert  \alpha(f_l)\prod_{l=1}^L\mu_{\opn{ker}(W_l)}(\mcl{Y}_{l-1})}{\sqrt{S}\prod_{l=1}^L\vert\det W_l\vert_{\opn{ker}(W_l)^{\perp}}\vert^{1/2}}.
\end{align*}
\end{theorem}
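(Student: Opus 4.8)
The plan is to keep the RKHS skeleton of Theorems~\ref{thm:rademacher_unitary} and \ref{thm:rademacher_injective} intact and to localize all the new work in the evaluation of the feature-map norm for the weighted Koopman representation. First I would use Proposition~\ref{prop:isomorphism}: because $p_{c,x}\in\mcl{K}$, the map $\iota$ sends $p_{c,x}$ to $F_c(\cdot,x)\in\mcl{R}_k$ with $\Vert F_c(\cdot,x)\Vert_{\mcl{R}_k}=\Vert p_{c,x}\Vert_{\hil_0}=1$. The reproducing property gives $F_c(W_1,\ldots,W_L,x_s)=\bracket{\phi(W_1,\ldots,W_L),F_c(\cdot,x_s)}_{\mcl{R}_k}$, so for each admissible tuple $(W_1,\ldots,W_L)$ and each draw of the Rademacher variables, $\sum_{s}\epsilon_sF_c(W_1,\ldots,W_L,x_s)=\bracket{\phi(W_1,\ldots,W_L),\sum_s\epsilon_sF_c(\cdot,x_s)}_{\mcl{R}_k}$. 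Cauchy--Schwarz, a supremum over the parameters, and then Jensen's inequality with the orthonormality $\mr{E}[\epsilon_s\overline{\epsilon_{s'}}]=\delta_{ss'}$ bound the expectation of $\Vert\sum_s\epsilon_sF_c(\cdot,x_s)\Vert_{\mcl{R}_k}$ by $(\sum_s\Vert F_c(\cdot,x_s)\Vert_{\mcl{R}_k}^2)^{1/2}=\sqrt{S}$. Since $\Vert\phi(W_1,\ldots,W_L)\Vert_{\mcl{R}_k}^2=k((W_1,\ldots,W_L),(W_1,\ldots,W_L))=\Vert\tilde{\phi}(W_1,\ldots,W_L)\Vert_{\hil_0}^2$, this reduces the theorem to bounding $\sup_{(W_1,\ldots,W_L)}\Vert\tilde{\phi}(W_1,\ldots,W_L)\Vert_{\hil_0}$ by the numerator appearing in the claim, exactly as in Theorem~\ref{thm:rademacher_injective}.

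The core is then a layer-by-layer evaluation of
\[
\Vert\tilde{\phi}(W_1,\ldots,W_L)\Vert_{\hil_0}^2=\big\Vert\tilde{K}_{\psi_1,W_1}A_1\cdots A_{L-1}\tilde{K}_{\psi_L,W_L}v\big\Vert_{\hil_0}^2 ,
\]
peeling the operators from the outside. For the weighted Koopman factor $\tilde{K}_{\psi_l,W_l}$ into $\hil_{l-1}=L^2(\mcl{X}_{l-1})$, I would use that $\psi_l\equiv1$ on $\mcl{X}_{l-1}$ to write the squared norm as $\int_{\mcl{X}_{l-1}}\vert h(W_lx)\vert^2\mr{d}x$, then split $\mcl{X}_{l-1}=\mcl{Y}_{l-1}\oplus\mcl{Z}_{l-1}$ along the orthonormal basis $q_1,\ldots,q_{d_{l-1}}$, with $\mcl{Y}_{l-1}\subseteq\opn{ker}(W_l)$ and $\mcl{Z}_{l-1}\subseteq\opn{ker}(W_l)^{\perp}$. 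Since $W_l$ annihilates the kernel directions, the integrand is constant over $\mcl{Y}_{l-1}$ and Fubini extracts the factor $\mu_{\opn{ker}(W_l)}(\mcl{Y}_{l-1})$; on the co-kernel directions $W_l\vert_{\opn{ker}(W_l)^{\perp}}$ is a bijection onto $\mcl{R}(W_l)$, so the injective change of variables contributes the pseudo-determinant $\vert\det(W_l\vert_{\opn{ker}(W_l)^{\perp}})\vert$ and turns the remaining integral into $\int_{W_l\mcl{X}_{l-1}}\vert h\vert^2\mr{d}\mu_{\mcl{R}(W_l)}=\alpha(h)^2\Vert h\Vert_{\tilde{\hil}_l}^2$. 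Applying this with $h=f_l$ and using $\Vert A_lh\Vert\le\Vert A_l\Vert\Vert h\Vert$ for the activation Koopman factors $A_l=K_{\sigma_l}$, the recursion telescopes and multiplies the per-layer contributions into the product over $l=1,\ldots,L$ of $\mu_{\opn{ker}(W_l)}(\mcl{Y}_{l-1})$ and $\vert\det(W_l\vert_{\opn{ker}(W_l)^{\perp}})\vert$ in numerator and denominator, the factors $\Vert A_l\Vert\alpha(f_l)$ for $l=1,\ldots,L-1$, and $\Vert v\Vert$. Taking the supremum over the admissible set then yields the stated bound.

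The main obstacle is the non-injective change of variables in the weighted Koopman step. Unlike the injective case, $K_{W_l}$ alone is unbounded, and it is precisely the indicator weight $\psi_l$ that restores integrability by confining the integral to $\mcl{X}_{l-1}$; I would have to check that the box $\mcl{X}_{l-1}=\{\sum_ic_iq_i\mid c_i\in[a_i,b_i]\}$ aligned with the kernel/co-kernel splitting legitimizes the Fubini factorization, that $W_l\mcl{Z}_{l-1}=W_l\mcl{X}_{l-1}$ so that $\alpha(f_l)$ is indeed the correct normalizer appearing in its definition, and that the Jacobian of $W_l\vert_{\opn{ker}(W_l)^{\perp}}$ equals the product of the nonzero singular values $\vert\det(W_l\vert_{\opn{ker}(W_l)^{\perp}})\vert$. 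Once these geometric facts are in place, the remainder is the bookkeeping of the telescoping product, identical in spirit to the proof of Theorem~\ref{thm:rademacher_injective}.
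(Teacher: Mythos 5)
Your proposal is correct and follows essentially the same route as the paper: the RKHS reduction via Proposition~\ref{prop:isomorphism}, the reproducing property, Cauchy--Schwarz and Jensen to reduce everything to the feature-map norm, and then the per-layer evaluation of $\Vert\tilde{K}_{\psi_l,W_l}h\Vert^2$ by Fubini over the $\mcl{Y}_{l-1}\oplus\mcl{Z}_{l-1}$ split, extracting $\mu_{\opn{ker}(W_l)}(\mcl{Y}_{l-1})$ and the pseudo-determinant from the injective change of variables, with the ratio recognized as $\alpha(h)^2\Vert h\Vert^2$ and applied at $h=f_l$. The geometric checks you flag (alignment of the box with the kernel/co-kernel splitting, $W_l\mcl{Z}_{l-1}=W_l\mcl{X}_{l-1}$, the Jacobian being the product of nonzero singular values) are exactly the steps the paper's displayed computation relies on implicitly.
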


\begin{remark}
If the output of the $l$th layer has small values in the direction of $\opn{ker}(W_{l+1})$, then the factor $\mu_{\opn{ker}(W_{l+1})}(\mcl{Y}_l)$ is small. 
We expect that the magnitude of the noise is smaller than that of the essential signals.
This implies that if the weight $W_{l+1}$ is learned so that $\opn{ker}(W_{l+1})$ becomes the direction of noise, i.e., so that the noise is removed by $W_{l+1}$, the model generalizes well.
\citet{arora18} insist that the noise stability property implies that the model generalizes well.
The result of Theorem~\ref{thm:bound_general} does not contradict the results of \citet{arora18}.
\end{remark}

\subsection{Convolutional neural network}\label{subsec:cnn}
Let $I_l=J_{l,1}\times\cdots\times J_{l,d_l}\subseteq \mathbb{Z}^{d_l}$ be a finite index set and $\Theta_l=\{\theta\in \mathbb{R}^{I_l}\,\mid\, x\mapsto \theta\ast x\mbox{ is invertible}\}$.
Let $\theta_l\in\Theta_l$, $P_l$ be the matrix representing the average pooling with pool size $m_l$, which is defined as $(P_l)_{i,j}=1/m_l$ for $i,j\in I_l$ if the $j$th element of the input is pooled in the $i$th element of the output, and $\sigma_l$ be the same as in Subsection~\ref{subsec:injective_nn}.
Let $\mcl{X}_0\subseteq \mathbb{R}^{I_1}$, 
$\theta_l\ast P_{l-1}\sigma_{l-1}(\theta_{l-1}\ast\cdots \theta_2\ast P_1\sigma_1(\theta_1\ast \mcl{X}_0))\subseteq \tilde{\mcl{X}}_{l}\subseteq \mathbb{R}^{I_l}$ and $P_{l}\sigma_{l}(\theta_{l}\ast\cdots \theta_2\ast P_1\sigma_1(\theta_1\ast \mcl{X}_0))\subseteq {\mcl{X}}_{l}\subseteq \mathbb{R}^{I_{l+1}}$ satisfying $\mu_{\mathbb{R}^{I_l}}(\tilde{\mcl{X}}_l)>0$ and $\mu_{\mathbb{R}^{I_{l+1}}}(\mcl{X}_l)>0$.
Let $\tilde{d}_l=\opn{dim}(\opn{ker}(P_{l}))$, $q_1,\ldots,q_{\tilde{d}_l}$ be an orthonormal basis of $\opn{ker}(P_l)$, $q_{\tilde{d}_l+1},\ldots, q_{d_l}$ be an orthonormal basis of $\opn{ker}(P_l)^{\perp}$, $\hat{\mcl{X}}_l=\{\sum_{i=1}^{d_l}c_iq_i\,\mid\,c_i\in [a_i,b_i]\}$ for some $a_i<b_i$ such that $\sigma_{l}(\theta_{l}\ast\cdots \theta_2\ast P_1\sigma_1(\theta_1\ast \mcl{X}_0))\subseteq \hat{\mcl{X}}_{l}\subseteq \mathbb{R}^{I_{l}}$, $\hat{\mcl{Y}}_l=\{\sum_{i=1}^{\tilde{d}_l}c_iq_i\,\mid\,c_i\in [a_i,b_i]\}$, and $\hat{\mcl{Z}}_l=\{\sum_{i=\tilde{d}_l+1}^{d_l}c_iq_i\,\mid\,c_i\in [a_i,b_i]\}$. 
Let $\hil_l=L^2(\mcl{X}_l)$, $\tilde{\hil}_l=L^2(\tilde{\mcl{X}}_l)$, $\hat{\hil}_l=L^2(\hat{\mcl{X}}_l)$, and $\eta_l:\Theta_l\to B(\tilde{\hil}_l,\hil_{l-1})$ be defined as $\eta_l(\theta)h(x)=h(\theta\ast x)$, where $\ast$ is the convolution.
Note that the convolution is a linear operator whose eigenvalues are Fourier components $\gamma_m(\theta_l):=\sum_{j\in I_l}\theta_j\mr{e}^{\mr{i}(S_l\;j)\cdot m}$ for $m\in I_l$, where $S_l$ is the diagonal matrix whose diagonal is the scaling factor $[1/(2\pi \vert J_{l,1}\vert),\ldots,1/(2\pi \vert J_{l,d_l}\vert)]$.
Let 
$A_l=K_{\sigma_l}\tilde{K}_{\psi_l,P_l}$, where $\tilde{K}_{\psi_l,P_l}$ and $K_{\sigma_l}$ are weighted Koopman and Koopman operators from ${\hil}_l$ to $\hat{\hil}_l$ and from $\hat{\hil}_l$ to $\tilde{\hil}_{l}$, respectively.
Here, $\psi_l$ is defined as $\psi_l(x)=\psi_l(x_1)=1$ for $x\in\hat{\mcl{X}}_{l}$, where $x=x_1+x_2$ with $x_1\in \hat{\mcl{Y}}_{l}$ and $x_2\in \hat{\mcl{Z}}_{l}$, and $\psi_l(x)=0$ for $x\notin \hat{\mcl{X}}_{l}$. 
Then, we have
\begin{align*}
f(\theta_1,\ldots,\theta_L)(x) = & \psi_1(\sigma_1(\theta_1\ast x))\cdots\psi_{L-1}(\sigma_{L-1}(\theta_{L-1}\ast P_{L-2}\sigma_{L-2}(\cdots P_1\sigma_1(\theta_1\ast x)))) \\
&\qquad\cdot v(\theta_L \ast P_{L-1}\sigma_{L-1}(\theta_{L-1} \ast \cdots \ast P_1\sigma_1(\theta_1\ast x ) \cdots ) ).
\end{align*}
Let $\beta_l(\theta)=\prod_{m\in I_l}\gamma_m(\theta)$ and $\mcl{F}_c=\{F_c(\theta_1,\ldots,\theta_L,\cdot)\,\mid\,\vert\beta(\theta_1)\vert^{-1/2},\ldots,\vert\beta(\theta_L)\vert^{-1/2}\le D\}$.

\begin{proposition}\label{prop:bound_cnn}
Assume $p_{c,x}\in\mcl{K}$ for $x\in \mcl{X}_0$.
Then, we have
\begin{align*}
 \hat{R}(\mcl{F}_c,x_1,\ldots,x_S)&\le \sup_{\vert\beta(\theta_l)\vert^{-1/2}\le D}\frac{\red{E(c)}\Vert v\Vert\prod_{l=1}^{L-1}\Vert A_l\Vert \mu_{\opn{ker}(P_l)}(\hat{\mcl{Y}}_{l})}{\sqrt{S}\prod_{l=1}^L\vert \beta_l(\theta_l) \vert^{1/2}}.
\end{align*}
\end{proposition}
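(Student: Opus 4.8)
The plan is to reuse verbatim the RKHS argument behind Theorem~\ref{thm:rademacher_unitary} and Theorem~\ref{thm:bound_general}, replacing only the single norm estimate that previously exploited unitarity. First I would use the hypothesis $p_{c,x}\in\mcl{K}$ together with the isometric isomorphism $\iota$ of Proposition~\ref{prop:isomorphism} to write $F_c(\cdot,x)=\iota(p_{c,x})\in\mcl{R}_k$ with $\Vert F_c(\cdot,x)\Vert_{\mcl{R}_k}=\Vert p_{c,x}\Vert_{\hil_0}=1$. The reproducing property then gives $F_c(\theta_1,\ldots,\theta_L,x)=\bracket{\phi(\theta_1,\ldots,\theta_L),F_c(\cdot,x)}_{\mcl{R}_k}$, so that, after Cauchy--Schwarz in $\mcl{R}_k$ and the standard Rademacher estimate $\mr{E}[\Vert\sum_{s=1}^S\epsilon_sF_c(\cdot,x_s)\Vert_{\mcl{R}_k}]\le(\sum_{s=1}^S\Vert F_c(\cdot,x_s)\Vert_{\mcl{R}_k}^2)^{1/2}=\sqrt{S}$ (Jensen's inequality together with $\mr{E}[\overline{\epsilon_s}\epsilon_t]=\delta_{st}$), the problem reduces to
\[
\hat{R}(\mcl{F}_c,x_1,\ldots,x_S)\le\frac{1}{\sqrt{S}}\sup_{\theta_1,\ldots,\theta_L}\Vert\tilde{\phi}(\theta_1,\ldots,\theta_L)\Vert_{\hil_0},
\]
using $\Vert\phi(\theta_1,\ldots,\theta_L)\Vert_{\mcl{R}_k}^2=k((\theta_1,\ldots,\theta_L),(\theta_1,\ldots,\theta_L))=\Vert\tilde{\phi}(\theta_1,\ldots,\theta_L)\Vert_{\hil_0}^2$ and $\tilde{\phi}(\theta_1,\ldots,\theta_L)=\eta_1(\theta_1)A_1\cdots A_{L-1}\eta_L(\theta_L)v$.

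The remaining and genuinely new task is to bound $\Vert\tilde{\phi}(\theta_1,\ldots,\theta_L)\Vert_{\hil_0}$ by propagating operator norms through the chain of $L^2$ spaces, treating the convolution and pooling factors separately. For each learnable convolution Koopman factor $\eta_l(\theta_l)=K_{\theta_l\ast\cdot}$ I would diagonalize the cyclic convolution by the discrete Fourier transform, so that its eigenvalues are exactly the Fourier components $\gamma_m(\theta_l)$ and its determinant equals $\beta_l(\theta_l)=\prod_{m\in I_l}\gamma_m(\theta_l)$; the substitution $y=\theta_l\ast x$ and the domain inclusion $\theta_l\ast\mcl{X}_{l-1}\subseteq\tilde{\mcl{X}}_l$ then give $\Vert\eta_l(\theta_l)h\Vert^2=\vert\beta_l(\theta_l)\vert^{-1}\int_{\theta_l\ast\mcl{X}_{l-1}}\vert h\vert^2\,\mr{d}y\le\vert\beta_l(\theta_l)\vert^{-1}\Vert h\Vert^2$, that is, $\Vert\eta_l(\theta_l)\Vert\le\vert\beta_l(\theta_l)\vert^{-1/2}$. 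Multiplied over $l$, these produce the denominator $\prod_{l=1}^L\vert\beta_l(\theta_l)\vert^{1/2}$.

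For the fixed factors $A_l=K_{\sigma_l}\tilde{K}_{\psi_l,P_l}$ the activation Koopman $K_{\sigma_l}$ contributes $\Vert A_l\Vert$, which is finite and estimable through Lemma~\ref{lem:koopman_bounded}, while the kernel of the pooling operator is where the measure factor enters. Since $\psi_l$ is the indicator of $\hat{\mcl{X}}_l$ and $P_lx=P_lx_2$ depends only on the component $x_2\in\opn{ker}(P_l)^\perp$, I would split $L^2(\hat{\mcl{X}}_l)$ along $\opn{ker}(P_l)\oplus\opn{ker}(P_l)^\perp$ and integrate out the kernel directions over $\hat{\mcl{Y}}_l$, on which the integrand is constant; this produces the numerator factor $\mu_{\opn{ker}(P_l)}(\hat{\mcl{Y}}_l)$ by the same mechanism that yields $\mu_{\opn{ker}(W_l)}(\mcl{Y}_{l-1})$ in Theorem~\ref{thm:bound_general}. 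Assembling the per-layer estimates and taking the supremum over $\{\vert\beta_l(\theta_l)\vert^{-1/2}\le D\}$ then gives the stated inequality.

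The hard part will be the bookkeeping of Jacobians, exponents, and domain inclusions across the several distinct spaces $\hil_l,\tilde{\hil}_l,\hat{\hil}_l$: I must verify that the image of each factor lands in the domain of the next one (so that the naive product of operator norms is legitimate), confirm that the convolution determinant is precisely $\vert\beta_l(\theta_l)\vert$, and track the exact powers of the measure and Jacobian factors so that the pooling split contributes $\mu_{\opn{ker}(P_l)}(\hat{\mcl{Y}}_l)$ and leaves no residual Jacobian in the denominator beyond the stated $\beta_l$ factors. Keeping the indicator weights $\psi_l$ consistent with the chosen domains $\mcl{X}_l,\tilde{\mcl{X}}_l,\hat{\mcl{X}}_l$, and handling the average-pooling normalization $1/m_l$ correctly inside these changes of variables, are the remaining technical points.
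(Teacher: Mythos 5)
Your proposal follows essentially the same route as the paper's proof: the RKHS reduction from Theorem~\ref{thm:rademacher_unitary} to a bound on $\sup\Vert\tilde{\phi}(\theta_1,\ldots,\theta_L)\Vert_{\hil_0}$, the Fourier-diagonalization/change-of-variables estimate $\Vert\eta_l(\theta_l)\Vert\le\vert\beta_l(\theta_l)\vert^{-1/2}$, and the kernel-splitting bound $\Vert\tilde{K}_{\psi_l,P_l}\Vert\le\mu_{\opn{ker}(P_l)}(\hat{\mcl{Y}}_l)$ carried over from the proof of Theorem~\ref{thm:bound_general}. The bookkeeping points you flag (in particular the normalization $\vert\det P_l\vert_{\opn{ker}(P_l)^{\perp}}\vert=1$ for the pooling matrix) are exactly the steps the paper treats implicitly, so your plan is correct and matches the paper's argument.
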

\begin{remark}
If $\sigma_l$ is bounded, then we can set $\hat{X}_l$ independent of $\theta_1,\ldots,\theta_l$ so that it covers the range of $\sigma_l$.
Since $P_l$ is a fixed operator, the factor $\mu_{\opn{ker}(P_1)}(\hat{\mcl{Y}}_l)$ is a constant in this case.
\end{remark}

\section{Numerical Results}
We numerically confirm the validity of the proposed bound.
Experimental details are in Appendix~\ref{ap:exp_details}.

\begin{figure}
    \centering
    \subfigure[]{\includegraphics[width=0.28\textwidth]{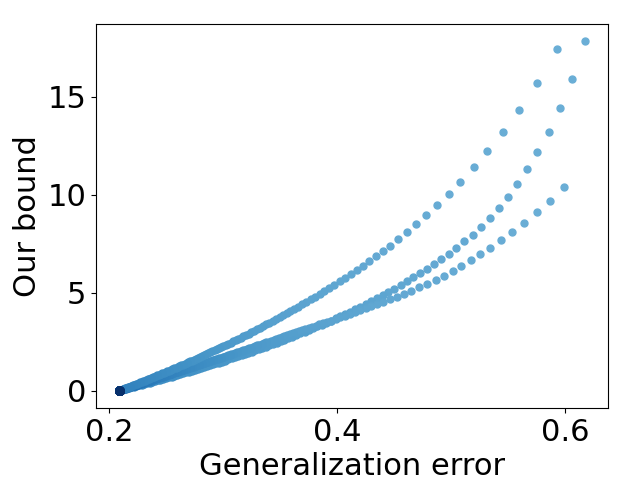}}
    \subfigure[]{\includegraphics[width=0.34\textwidth]{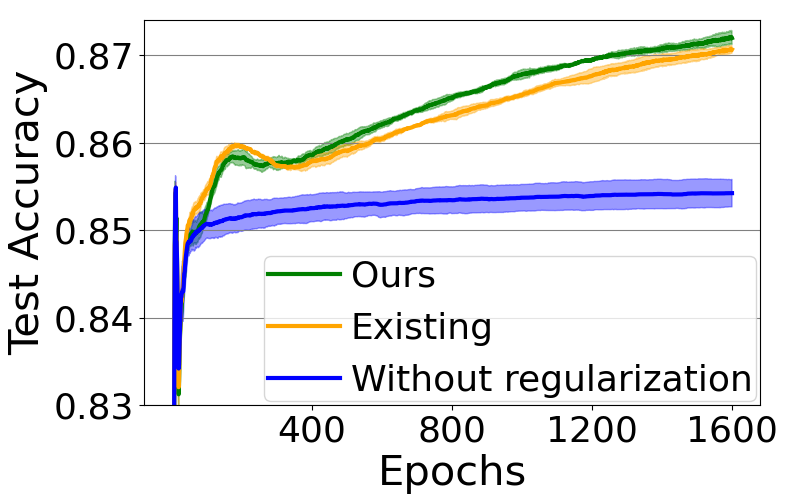}}
    \subfigure[]{\includegraphics[width=0.34\textwidth]{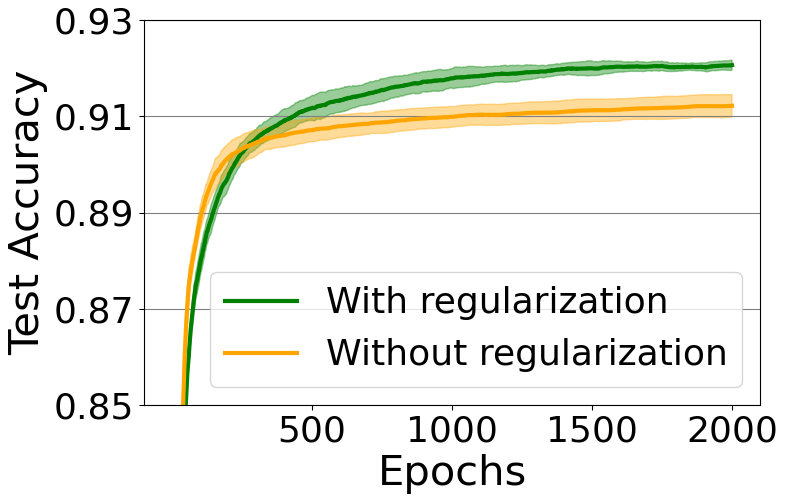}}\vspace{-.5cm}
    \caption{(a) Scatter plot of the generalization error versus our bound (for 3 independent runs). The color is set to get dark as the epoch proceeds. (b) Test accuracy with the regularization based on our bound and that based on the existing bound (deep neural net with dense layers). (c) Test accuracy with and without the regularization based on our bound (LeNet).}
    \label{fig:mainfig}
\end{figure}
\paragraph{Validity of the bound}
To show the relationship between the generalization error and the proposed bound, we consider a regression problem with synthetic data on $\mcl{X}_0=[-1,1]^3$.
The target function $t$ is $t(x)=\mr{e}^{-\Vert 2x-1\Vert^2}$.
We constructed a network $f(x)=v(W_2\sigma (W_1x+b_1)+b_2)$, where $W_1\in\mathbb{R}^{3\times 3}$, $W_2\in\mathbb{R}^{6\times 3}$, $b_1\in\mathbb{R}^{3}$, $b_2\in\mathbb{R}^{6}$, $v(x)=w_3\mr{e}^{-\Vert x\Vert^2}$, $w_3\in \mathbb{R}$, and $\sigma$ is the elementwise hyperbolic tangent. 
We created a training dataset from randomly drawn samples from the uniform distribution on $[-1,1]^3$.
The training sample size $S$ is $1000$.
Our bound is proportional to the value $r:=\vert w_3\vert\sup_{[x_1,x_2,x_3]\in \sigma(W_1\mcl{X}_0+b_1)}1/(1-x_1^2)/(1-x_2^2)/(1-x_3^3)\vert\det W_1^*W_1\vert^{-1/4}\cdot\vert\det W_2^*W_2\vert^{-1/4}$ since $\Vert v\Vert\le \vert w_3\vert\int_{\mathbb{R}^6}\mr{e}^{-\Vert x\Vert}\mr{d}x$ and according to Lemma~\ref{lem:sigmoid_tanh}.
We added $0.1r$ as a regularization term.
Figure~\ref{fig:mainfig} (a) illustrates the relationship between the generalization error and our bound throughout the learning process.
We can see that the generalization bound gets small in proportion to our bound.


\paragraph{Comparison with existing bounds}
To compare our bound with existing bounds, we considered the same classification task with MNIST as in \citet{hashimoto2024koopmanbased}.
We constructed the same model $f(x)=\sigma_4(W_4\sigma(W_3\sigma(W_2\sigma(W_1x+b_1)+b_2)+b_3)+b_4)$ as \citet{hashimoto2024koopmanbased} with dense layers.
Based on the bound, we tried to make the factors $\Vert A_l\Vert$, $1/\det W_l^*W_l^{1/2}$, and $\Vert v\Vert$ small, where $v(x)=\sigma_4(W_4\sigma(W_3x+b_3)+b_4)$, $\sigma(x_1,\ldots,x_d)=[\tilde{\sigma}(x_1),\ldots,\tilde{\sigma}(x_d)]$ is the elementwise smooth Leaky ReLU proposed by~\citet{biswas22}, and $\sigma_4$ is the softmax.
This setting is for meeting the setting in~\citep{hashimoto2024koopmanbased}.
We set $\mcl{X}_0=[0,1]^{784}$, $\tilde{\mcl{X}}_1=(\Vert W_1\Vert +\Vert b_1\Vert_{\infty})[-1,1]^{1024}\supseteq W_1\mcl{X}_0+b_1$, $\mcl{X}_1=\sigma(\tilde{\mcl{X}}_1)\supseteq \sigma(W_1\mcl{X}_0+b_1)$,
$\tilde{\mcl{X}}_2=(\Vert W_2\Vert(\Vert W_1\Vert +\Vert b_1\Vert_{\infty})+\Vert b_2\Vert_{\infty})[-1,1]^{2048}$, and $\mcl{X}_2=\sigma(\tilde{\mcl{X}}_2)\supseteq \sigma(W_2\sigma(W_1\mcl{X}_0+b_1)+b_2)$.
To make the factor $\Vert A_l\Vert$ small, we applied Lemma~\ref{lem:koopman_bounded} and set a regularization term $r_1=\sup_{x\in (\mcl{X}_1)_1}\vert (\tilde{\sigma}^{-1})'(x)\vert+\sup_{x\in (\mcl{X}_2)_1}\vert (\tilde{\sigma}^{-1})'(x)\vert$.
Here, $(\mcl{X}_1)_1$ is the set of the first elements of the vectors in $\mcl{X}_1$.
In addition, we set $r_2=1/(1+\det W_1^*W_1^{1/4})+1/(1+\det W_2^*W_2^{1/4})$.
Regarding $\Vert v\Vert$, we set $r_3=\Vert W_1\Vert+\Vert W_2\Vert$ since we have
$\Vert v\Vert^2=\int_{\mcl{X}_2}\vert v(x)\vert^2 \mr{d}x
\le \mu(\mcl{X}_2)\le \mu(\tilde{\mcl{X}}_2)$.
We added the regularization term $0.01(r_1+r_2+r_3)$ to the loss function.
The training sample size is $S=1000$.
We compared the regularization based on our bound with that based on the bound proposed by~\citet{hashimoto2024koopmanbased}.
The result is shown in Figure~\ref{fig:mainfig} (b).
Note that since the training sample size $S$ is small, obtaining a high test accuracy is challenging.
We can see that with the regularization based on our bound, we obtain a better performance than that based on the existing bound.

\paragraph{Validity for existing CNN models (LeNet)}
To show that our bound is valid for practical models, we applied the regularization based on our bound to LeNet on MNIST~\citep{lecun98}.
We set the activation function $\sigma$ of each layer as the elementwise hyperbolic tangent function and the final nonlinear transformation $v$ as the softmax.
We used the same training and test datasets as the previous experiment.
In addition, we set $\mcl{X}_0=[0,1]^{784}$, $\tilde{\mcl{X}}_l=(\Vert W_l\Vert +\Vert b_l\Vert_{\infty})[-1,1]^{1024}\supseteq W_l\sigma(\cdots\sigma(W_1\mcl{X}_0+b_1)+\cdots)+b_l$, $\mcl{X}_l=\sigma(\tilde{\mcl{X}}_l)\supseteq \sigma(W_l\sigma(\cdots\sigma(W_1\mcl{X}_0+b_1)+\cdots)+b_l)$.
Here, $W_l$ is the matrix that represents the $l$th convolution layer.
We note that the bound by \citet{hashimoto2024koopmanbased} is not valid for the models with hyperbolic tangent and softmax functions.
To make the factor $\Vert A_l\Vert$ small, we applied Lemmas~\ref{lem:koopman_bounded} and \ref{lem:sigmoid_tanh} and tried to make $\inf_{x\in\mcl{X}_l}(1-x^2)$ large.
Thus, we set a regularization term $r_1=\sum_{l=1}^4\sup_{x\in (\mcl{X}_l)_1}1/(1+1-x^2)$.
Regarding the factor $\det{W_l\mid_{\opn{ker}(W_l)^{\perp}}}^{-1/2}$, we set $r_2=\Vert (0.01I+W_lW_l^*)^{-1}\Vert=1/(0.01+s_{\opn{min}}(W_l))$, to make $s_{\opn{min}}(W_l)$ large, where $s_{\opn{min}}(W_l)$ is the smallest singular value of $W_l$ since the determinant is described as the product of the singular values. 
For $\Vert v\Vert$, we set $r_3=\Vert W_L\Vert$ in the same way as in the previous experiment according to the definition of $\tilde{\mcl{X}}_L$.
We added the regularization term $0.1(r_1+r_2+r_3)$ to the loss function and compared it with the case without regularization.
The result is shown in Figure~\ref{fig:mainfig} (c).
We can see that with the regularization, the model performs better than in the case without the regularization, which shows the validity of our bound for LeNet.

\section{Conclusion and Limitation}
In this paper, we derived a new Koopman-based Rademacher complexity bound.
Analogous to the existing Koopman-based bounds, our bound describes that neural networks with high-rank weight matrices can generalize well.
Existing Koopman-based bounds rely on the smoothness of the function space and the unboundedness of the data space, which makes the result valid for limited neural network models with smooth and unbounded activation functions.
We resolved this issue by introducing an algebraic representation of neural network models and constructing an RKHS associated with a kernel defined with this representation.
Our bound is valid for a wide range of models, such as those with the hyperbolic tangent, sigmoid, and Leaky ReLU activation functions.
Our framework \red{is the first step to} filling the gap between the Koopman-based analysis of generalization bounds and practical situations.

Although our bound can be applied to models more realistic than the existing Koopman-based bounds, it is not valid for activation functions whose derivative is zero in some domain, such as the exact ReLU.
Introducing a variant of the Koopman operator such as the weighted Koopman operator may help us deal with this situation, but more detailed investigation is left for future work.

\section*{Acknowledgment}
This work was supported by JSPS KAKENHI 24K21316, 25H01453, JP24K21316, JST BOOST JPMJBY24E2, and JST CREST JPMJCR2015, JPMJCR25I5, JPMJCR24Q6, JPMJCR24Q1.

\bibliography{Koopmanbib}
\bibliographystyle{iclr2026_conference}

\appendix
\section*{Appendix}

\section{Proofs}\label{ap:proofs}
We show the proofs of the theorems, propositions, and lemmas in the main text.

\begin{mythm}[Lemma~\ref{lem:koopman_bounded}]
Assume $\sigma:\tilde{\mcl{X}}\to\mcl{X}$ is bijective, $\sigma^{-1}$ is differentiable, and the Jacobian of $\sigma^{-1}$ is bounded in $\mcl{X}$.
Then, we have $\Vert K_{\sigma}\Vert\le \sup_{x\in {\mcl{X}}} \vert J\sigma^{-1}(x)\vert ^{1/2}$, where $J\sigma^{-1}$ is the Jacobian of $\sigma^{-1}$.
In particular, the Koopman operator $K_{\sigma}$ is bounded.   
\end{mythm}
\begin{proof}
For $h\in L^2(\mcl{X})$, we have
\begin{align*}
\Vert K_{\sigma}h\Vert^2&=\int_{\tilde{\mcl{X}}}\vert h(\sigma(x))\vert^2\mr{d}x
=\int_{\mcl{X}}\vert h(x)\vert^2 \vert J\sigma^{-1}(x)\vert \mr{d}x\\
&\le \sup_{x\in\mcl{X}}\vert J\sigma^{-1}(x)\vert \int_{\mcl{X}}\vert h(x)\vert^2 \mr{d}x
=\sup_{x\in\mcl{X}}\vert J\sigma^{-1}(x)\vert\Vert h\Vert^2.
\end{align*}
\end{proof}

\begin{mythm}[Lemma~\ref{lem:Leaky_relu}]
Let $\tilde{\mcl{X}}=\mcl{X}=\mathbb{R}^{d}$.
Let $\sigma$ be the elementwise Leaky ReLU defined as $\tilde{\sigma}(x)=ax$ for $x\le 0$ and $\tilde{\sigma}(x)=x$ for $x>0$, where $a>0$.
Then, we have $\Vert K_{\sigma}\Vert\le \max\{1, {1}/{a^d}\}^{1/2}$.
\end{mythm}
\begin{proof}
For $h\in L^2(\mcl{X})$, we have
\begin{align*}
 \Vert K_{\sigma} h\Vert^2& = \int_{\mathbb{R}^d} |h(\sigma(x))|^2 \mr{d}x \\
 &= \int_{(-\infty,0]^d}\hspace{-1cm} |h(ax)|^2 \mr{d}x + \int_{(0,\infty)\times (-\infty,0]^{d-1}}\hspace{-2cm} |h(\opn{diag}\{1,a,\ldots,a\}x)|^2 \mr{d}x+\cdots + \int_{(0,\infty)^d} \hspace{-0.8cm}|h(x)|^2 \mr{d}x\\
& \le \max\left\{1, {1}/{a^d}\right\} \int_{\mathbb{R}^d} |h(x)|^2 \mr{d}x = \max\left\{1, {1}/{a^d}\right\}\Vert h\Vert^2.
\end{align*} 
\end{proof}

\begin{mythm}[Proposition~\ref{prop:isomorphism}]
The map $\iota$ is isometrically isomorphic.
\end{mythm}
Proposition~\ref{prop:isomorphism} is derived using the following lemmas.

\begin{lemma}\label{lem:iota_injective}
The map $\iota$ is injective.
\end{lemma}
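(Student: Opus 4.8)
The statement to prove is that $\iota:\mcl{K}\to\mcl{R}_k$, defined by $\iota(h)=\bigl(\mathbf{g}\mapsto\bracket{\tilde\phi(\mathbf{g}),h}_{\hil}\bigr)$, has trivial kernel. Since $\iota$ is additive (being the inner product against $h$ in one fixed slot, it is linear or conjugate-linear in $h$, hence in either case $\iota(h_1)-\iota(h_2)=\iota(h_1-h_2)$), injectivity is equivalent to showing $\iota(h)=0\Rightarrow h=0$ for $h\in\mcl{K}$. So the whole argument reduces to an orthogonality computation in the ambient Hilbert space $\hil$.

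\textbf{Key steps.} First I would unfold what $\iota(h)=0$ means: the function $\mathbf{g}\mapsto\bracket{\tilde\phi(\mathbf{g}),h}_{\hil}$ is identically zero on $G^L$, i.e.\ $\bracket{\tilde\phi(\mathbf{g}),h}_{\hil}=0$ for every $\mathbf{g}=(g_1,\ldots,g_L)\in G^L$. Next I would recall that by definition $\mcl{K}_0=\{\sum_{i=1}^n c_i\tilde\phi(\mathbf{g}_i)\}$ is precisely the linear span of the vectors $\tilde\phi(\mathbf{g})$. Hence, by linearity of the inner product in its first slot, $\bracket{w,h}_{\hil}=0$ for every $w\in\mcl{K}_0$, i.e.\ $h$ is orthogonal to the dense subspace $\mcl{K}_0$. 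Then I would invoke continuity of the inner product to extend this orthogonality to the closure: for any $w\in\mcl{K}=\overline{\mcl{K}_0}$, taking $w_n\in\mcl{K}_0$ with $w_n\to w$ gives $\bracket{w,h}_{\hil}=\lim_n\bracket{w_n,h}_{\hil}=0$, so $h\perp\mcl{K}$. Finally, since $h\in\mcl{K}$ by hypothesis, I would apply this with $w=h$ to conclude $\Vert h\Vert^2=\bracket{h,h}_{\hil}=0$, hence $h=0$.

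\textbf{Main obstacle.} There is no serious analytic difficulty here; the argument is a standard ``orthogonal to a total set plus belonging to the closed span forces zero'' manoeuvre. The only points requiring care are bookkeeping rather than substance: ensuring the reduction to $\ker\iota=\{0\}$ is justified despite the possible conjugate-linearity of $\iota$ in $h$ (handled by additivity above), and being explicit that the passage from orthogonality on $\mcl{K}_0$ to orthogonality on $\mcl{K}=\overline{\mcl{K}_0}$ uses continuity of $\bracket{\cdot,h}_{\hil}$. Notably, this lemma does \emph{not} require $\rho$ to be irreducible or the $A_l$ to be invertible; those hypotheses enter only in the surjectivity/universality statements (Lemma~\ref{lem:dense}), whereas injectivity of $\iota$ holds for the kernel $k$ in full generality.
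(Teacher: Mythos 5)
Your proof is correct and follows essentially the same route as the paper's: unfold $\iota(h)=0$ into orthogonality of $h$ against every $\tilde{\phi}(\mathbf{g})$, extend by linearity to all of $\mcl{K}_0$, and conclude $h=0$ since $h$ lies in $\mcl{K}=\overline{\mcl{K}_0}$. Your version is slightly more careful than the paper's (which leaves the continuity-of-inner-product passage to the closure implicit), but the argument is the same.
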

\begin{proof}
Assume $\iota(h)=0$.
Then, for any $\mathbf{g}\in G$, $\sbracket{\tilde{\phi}(\bg),h}=0$.
Thus, for any $n\in\mathbb{N}$, $\bg_1,\ldots,\bg_n$, and  $c_1,\ldots,c_n\in\mathbb{C}$, we have $\sbracket{\sum_{i=1}^nc_i\tilde{\phi}(\bg_i),h}=0$, which means for any $\tilde{h}\in\mcl{K}_0$, $\sbracket{\tilde{h},h}=0$.
Thus, we obtain $h=0$.
\end{proof}

\begin{lemma}\label{lem:iota_surjective}
The map $\iota$ preserves the norm and is surjective.
\end{lemma}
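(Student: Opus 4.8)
The plan is to establish that $\iota$ preserves the norm on the dense subspace $\mcl{K}_0$ first, then extend by continuity to all of $\mcl{K}$, and finally deduce surjectivity from the fact that a norm-preserving map with dense range onto a complete space is surjective. Recall from Lemma~\ref{lem:iota_injective} that $\iota$ is already injective, so combined with what follows we will obtain the isometric isomorphism claimed in Proposition~\ref{prop:isomorphism}.

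First I would verify the norm-preservation on $\mcl{K}_0$ by a direct computation. Take a typical element $h=\sum_{i=1}^n c_i\tilde{\phi}(\bg_i)\in\mcl{K}_0$. By definition of $\iota$ and the kernel $k$, we have $\iota(h)=\sum_{i=1}^n c_i\,(\bg\mapsto\sbracket{\tilde\phi(\bg),\tilde\phi(\bg_i)})=\sum_{i=1}^n c_i\overline{\phi(\bg_i)}$ viewed in $\mcl{R}_k$; more precisely $\iota(h)$ evaluated at $\bg$ equals $\sum_i c_i k(\bg,\bg_i)$. The key identity is that $\iota(\tilde\phi(\bg_i))=\overline{c_i}$-free, namely $\iota$ sends the feature element $\tilde\phi(\bg_i)\in\mcl{K}_0$ to $\phi(\bg_i)\in\mcl{R}_{k,0}$, because $k(\bg,\bg_i)=\sbracket{\tilde\phi(\bg),\tilde\phi(\bg_i)}$. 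Then I would compute both norms explicitly:
\begin{align*}
\Vert h\Vert_{\hil}^2=\Bbracket{\sum_i c_i\tilde\phi(\bg_i),\sum_j c_j\tilde\phi(\bg_j)}_{\hil}=\sum_{i,j}\overline{c_i}c_j\,k(\bg_i,\bg_j),
\end{align*}
and observe this is exactly $\Vert\iota(h)\Vert_{\mcl{R}_k}^2$ by the definition of the RKHS inner product on $\mcl{R}_{k,0}$. Thus $\iota$ is an isometry on $\mcl{K}_0$, and in particular it is well-defined and continuous there.

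Next I would extend $\iota$ to the completion $\mcl{K}=\overline{\mcl{K}_0}$. Since $\iota$ is an isometry on the dense subspace $\mcl{K}_0$, it is uniformly continuous and extends uniquely to an isometry $\mcl{K}\to\mcl{R}_k$; I should check that the extension agrees with the original formula $\iota(h)=(\bg\mapsto\sbracket{\tilde\phi(\bg),h}_{\hil})$, which follows from continuity of the inner product in its second argument. For surjectivity, note that the image $\iota(\mcl{K}_0)$ is precisely $\mcl{R}_{k,0}$, the dense subspace generating $\mcl{R}_k$, since $\iota(\tilde\phi(\bg_i))=\phi(\bg_i)$ and these span $\mcl{R}_{k,0}$. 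An isometry has closed range on a complete domain, so $\iota(\mcl{K})$ is a closed subspace of $\mcl{R}_k$ containing the dense set $\mcl{R}_{k,0}$; hence $\iota(\mcl{K})=\mcl{R}_k$.

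The main obstacle I anticipate is bookkeeping with complex conjugates and the convention for which argument of the inner product and the kernel is conjugate-linear, since the definitions fix $k(\theta_i,\xi_j)$ with $\overline{c_i}d_j$ weights while $\iota$ is defined via $\sbracket{\tilde\phi(\bg),h}$. I would need to confirm that the pairing $\sum_{i,j}\overline{c_i}c_j k(\bg_i,\bg_j)$ produced by expanding $\Vert h\Vert_{\hil}^2$ matches the RKHS norm formula verbatim, so the isometry holds without a stray conjugation. Beyond this, the argument is essentially the standard construction of the canonical isometry between a Hilbert space and the RKHS generated by its Gram kernel, so no deeper difficulty is expected.
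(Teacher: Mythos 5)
Your proposal is correct and follows essentially the same route as the paper's proof: verify $\iota(\tilde\phi(\bg))=\phi(\bg)$ so that $\iota(\mcl{K}_0)=\mcl{R}_{k,0}$, check the isometry on $\mcl{K}_0$ by expanding both norms into $\sum_{i,j}\overline{c_i}c_j k(\bg_i,\bg_j)$, and deduce surjectivity from density of the image together with completeness (the paper writes this last step as an explicit Cauchy-sequence argument rather than invoking closedness of the range of an isometry, but the content is identical). The stray conjugate in your intermediate expression $\sum_i c_i\overline{\phi(\bg_i)}$ is only a notational slip, which you yourself correct in the very next clause.
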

\begin{proof}
By definition, $\iota$ is a linear map that maps $\tilde{\phi}(\bg)\in\mcl{K}_0$ to $\phi(\bg)\in\mcl{R}_{k,0}$.
Thus, we have $\iota(\mcl{K}_0)=\mcl{R}_{k,0}$.

For $h\in\mcl{K}_0$, there exist $n\in\mathbb{N}$, $\bg_1,\ldots,\bg_n\in G^L$, and $c_1,\ldots,c_n\in\mathbb{C}$ such that $h=\sum_{i=1}^nc_i\tilde{\phi}(\bg_i)$.
We have
\begin{align*}
\Vert\iota(h)\Vert^2_{\mcl{R}_k} = \bigg\Vert\sum_{i=1}^n c_i \phi(\bg_i)\bigg\Vert^2_{\mathcal{R}_k} = \sum_{i,j=1}^n\overline{c_i}c_jk(\bg_i,\bg_j)
=\sum_{i,j=1}^n\overline{c_i}c_j\sbracket{\tilde{\phi}(\bg_i),\tilde{\phi}(\bg_j)}_{\hil}
=\Vert h\Vert^2_{\hil}.
\end{align*}
Thus, $\iota$ preserves the norm, and in particular, it is bounded.

For any $r\in \mcl{R}_k$, there exists a sequence $r_1,r_2,\ldots \in \mcl{R}_{k,0}$ such that $\lim_{i\to\infty}r_i=r$.
Since $\iota(\mcl{K}_0)=\mcl{R}_{k,0}$, there exists $h_i\in\mcl{K}_0$ such that $\iota(h_i)=r_i$ for $i=1,2,\ldots$.
Thus, we have $r=\lim_{i\to\infty}r_i=\lim_{i\to\infty}\iota(h_i)=\iota(\lim_{i\to\infty}h_i)=\iota(h)$.
\end{proof}

\begin{mythm}[Lemma~\ref{lem:schur}]
Assume $\rho$ is irreducible.
Let $\alg=\{\sum_{i=1}^nc_i\rho(g_i)\,\mid\,n\in\mathbb{N}, g_i\in G, c_i\in\mathbb{C}\}$. 
Then, $\alg$ is dense in $B(\hil)$ with respect to the strong operator topology.
\end{mythm}
\begin{proof}
By the Schur's lemma (Lemma~\ref{lem:schur_original}), the commutant of $\overline{\alg}^{\opn{SOT}}$, the closure of $\alg$ with respect to the strong operator topology, is $\mathbb{C}I$.
Thus, the double commutant of $\overline{\mcl{A}}^{\opn{SOT}}$ is $B(\hil)$.
By the von Neumann double commutant theorem (Lemma~\ref{lem:double_commutant}), the double commutant of $\overline{\mcl{A}}^{\opn{SOT}}$ is $\overline{\mcl{A}}^{\opn{SOT}}$ itself.
Therefore, we have $\overline{\mcl{A}}^{\opn{SOT}}=B(\hil)$.
\end{proof}

\begin{mythm}[Lemma~\ref{lem:dense}]
Assume $\rho$ is irreducible and $A_1,\ldots, A_{L-1}$ are invertible.
Then, $\mcl{K}=\overline{\mcl{K}_0}=\hil$.    
\end{mythm}
\begin{proof}
Let $h\in\hil$. 
Then, there exists $B\in B(\hil)$ such that $h=Bv$. 
Let $\varepsilon>0$.
By Lemma~\ref{lem:schur}, there exist $n_L\in\mathbb{N}$, $g_{L,1},\ldots,g_{L,n_L}\in G$, and $c_{L,1},\ldots,c_{L,n_L}\in\mathbb{C}$ such that $\Vert \tilde{A}_Lv-A_{L-1}^{-1}v\Vert\le \varepsilon$, where $\tilde{A}_L=\sum_{\alpha_L=1}^{n_L}c_{L,\alpha_L}\rho(g_{L,\alpha_L})$.
In addition, there exist $n_{L-1}\in\mathbb{N}$, $g_{L-1,1},\ldots,g_{L-1,n_{L-1}}\in G$, and $c_{L-1,1},\ldots,c_{L-1,n_{L-1}}\in\mathbb{C}$ such that $\Vert \tilde{A}_{L-1}(A_{L-1}\tilde{A}_Lv)-A_{L-2}^{-1}(A_{L-1}\tilde{A}_Lv)\Vert\le \varepsilon$, where $\tilde{A}_{L-1}=\sum_{\alpha_{L-1}=1}^{n_{L-1}}c_{L-1,\alpha_{L-1}}\rho(g_{{L-1},\alpha_{L-1}})$.
We continue this process, and for $l=L-2,\ldots,2$, we obtain $n_l\in\mathbb{N}$, $g_{l,1},\ldots,g_{l,n_l}\in G$, and $c_{l,1},\ldots,c_{l,n_l}\in\mathbb{C}$ such that $\Vert \tilde{A}_{l}(A_{l}\tilde{A}_{l+1}A_{l+1}\cdots \tilde{A}_{L-1}A_{L-1}\tilde{A}_Lv)-A_{l-1}^{-1}(A_{l}\tilde{A}_{l+1}A_{l+1}\cdots \tilde{A}_{L-1}A_{L-1}\tilde{A}_Lv)\Vert\le \varepsilon$, where $\tilde{A}_{l}=\sum_{\alpha_{l}=1}^{n_{l}}c_{l,\alpha_{l}}\rho(g_{{l},\alpha_{l}})$.
Finally, we get $n_1\in\mathbb{N}$, $g_{1,1},\ldots,g_{1,n_1}\in G$, and $c_{1,1},\ldots,c_{1,n_1}\in\mathbb{C}$ such that $\Vert \tilde{A}_{1}(A_{1}\tilde{A}_{2}A_{2}\cdots \tilde{A}_{L-1}A_{L-1}\tilde{A}_Lv)-B(A_{1}\tilde{A}_{2}A_{2}\cdots \tilde{A}_{L-1}A_{L-1}\tilde{A}_Lv)\Vert\le \varepsilon$, where $\tilde{A}_{1}=\sum_{\alpha_{1}=1}^{n_{1}}c_{1,\alpha_{1}}\rho(g_{{1},\alpha_{1}})$.
Let $C=\tilde{A}_1A_1\cdots \tilde{A}_{L-1}A_{L-1}\tilde{A}_L$.
Then, we have
\begin{align*}
\Vert Cv - h\Vert &\le  \Vert C v - BA_1\tilde{A}_2 \cdots A_{L-1}\tilde{A}_Lv\Vert + \Vert B A_1 \tilde{A}_2\cdots A_{L-1}\tilde{A}_Lv - BA_2 \tilde{A}_3\cdots A_{L-1}\tilde{A}_Lv\Vert \\
&\quad +\cdots +\Vert B A_{L-2} \tilde{A}_{L-1}A_{L-1}\tilde{A}_Lv - BA_{L-1} \tilde{A}_Lv\Vert  + \Vert BA_{L-1}\tilde{A}_Lv - B\tilde{A}_{L}v\Vert\\
&\le \varepsilon + \Vert B A_1\Vert\varepsilon +\cdots +\Vert BA_{L-2}\Vert \varepsilon+\Vert BA_{L-1}\Vert \varepsilon. 
\end{align*}
\end{proof}

\begin{mythm}[Theorem~\ref{thm:rademacher_unitary}]
Let $\mcl{F}_c$ the function class $\{F_c(g_1,\ldots,g_L,\cdot)\,\mid\, g_1,\ldots,g_L\in G\}$.
Assume $p_{c,x}\in\mcl{K}$ for $x\in \mcl{X}_0$. Then, the Rademacher complexity of the function class $\mcl{F}_c$ is bounded as
\begin{align*}
\hat{R}(\mcl{F}_c,x_1,\ldots,x_S)\le\frac{\Vert A_1\Vert\cdots \Vert A_{L-1}\Vert\Vert v\Vert\red{E(c)}}{\sqrt{S}}.
\end{align*}    
\end{mythm}
\begin{proof}
Since $F_c(\cdot,x)=\iota(p_{c,x})\in\mcl{R}_k$, by the reproducing property, we have 
 \begin{align}
 &\frac{1}{S}\mr{E}\bigg[\sup_{\bg \in G^L} \sum_{s=1}^S F_c(\bg, x_s) \epsilon_s\bigg ]
 =\frac{1}{S}\mr{E}\bigg[\sup_{\bg \in G^L}  \Bbracket{\phi(\bg),\sum_{s=1}^SF_c(\cdot, x_s) \epsilon_s}_{\mcl{R}_k}\bigg ]\nn\\
 &\qquad \le \frac{1}{S}\sup_{\bg \in G^L}  \Vert\phi(\bg)\Vert_{\mcl{R}_k}\mr{E}\bigg[\bigg\Vert \sum_{s=1}^SF_c(\cdot, x_s) \epsilon_s\bigg\Vert_{\mcl{R}_k}\bigg]\nn\\
 &\qquad =\frac{1}{S}\sup_{\bg \in G^L}  \Vert\tilde{\phi}(\bg)\Vert_{\hil}\mr{E}\bigg[\bigg(\sum_{s,t=1}^S\bracket{F_c(\cdot, x_s) \epsilon_s,F_c(\cdot, x_t) \epsilon_t}_{\mcl{R}_k}\bigg)^{1/2}\bigg]\nn\\
 &\qquad \le \frac{1}{S}\sup_{\bg \in G^L}  \Vert\rho(g_1)A_1\cdots A_{L-1}\rho(g_L)v\Vert_{\hil}\bigg(\mr{E}\bigg[\sum_{s,t=1}^S\bracket{F_c(\cdot, x_s) \epsilon_s,F_c(\cdot, x_t) \epsilon_t}_{\mcl{R}_k}\bigg]\bigg)^{1/2}\nn\\
 &\qquad\le \frac{1}{S}\Vert A_1\Vert\cdots \Vert A_{L-1}\Vert\Vert v\Vert \bigg(\sum_{s=1}^S\Vert F_c(\cdot,x_s)\Vert_{\mcl{R}_k}^2\bigg)^{1/2},\label{eq:bound_basic}
 \end{align}
where the third equality is by Lemma~\ref{lem:iota_surjective}, the fourth inequality is by the Jensen's inequality, and the final inequality is derived since $\rho(g_1)\ldots,\rho(g_L)$ are unitary.

Since $F_c(\cdot,x)=\iota(p_{c,x})$, we apply Lemma~\ref{lem:iota_surjective} again and obtain 
\begin{align*}
\frac{1}{S}\Vert A_1\Vert\cdots \Vert A_{L-1}\Vert\Vert v\Vert \bigg(\sum_{s=1}^S\Vert F_c(\cdot,x_s)\Vert_{\mcl{R}_k}^2\bigg)^{1/2}
 &=\frac{1}{S}\Vert A_1\Vert\cdots \Vert A_{L-1}\Vert\Vert v\Vert \bigg(\sum_{s=1}^S\Vert p_{c,x_s}\Vert_{\mcl{H}}^2\bigg)^{1/2}\\
 &\red{\le}\frac{\Vert A_1\Vert\cdots \Vert A_{L-1}\Vert\Vert v\Vert \red{E(c)}}{\sqrt{S}},    
\end{align*}
where the last equality is derived since $p_{c,x}$ is the regularizer that satisfies $\Vert p_{c,x}\Vert_{\hil}=1$ for any $x\in\mcl{X}_0$.
\end{proof}

\begin{mythm}[Theorem~\ref{thm:rademacher_nn_constnat}]
Let $\mcl{NN}_c=\{NN_c(\bg,\cdot)\,\mid\, \bg\in G^L,\ \vert\det{W_1}\vert^{-1/2},\ldots,\vert\det{W_L}\vert^{-1/2}\le D\}$.  
The Rademacher complexity bound of $\mcl{NN}_c$ is
\begin{align*}
 \hat{R}(\mcl{NN}_c,x_1,\ldots,x_S)
\le  \frac{\red{E(c)}\Vert v\Vert\prod_{l=1}^{L-1}\Vert A_l\Vert}{\sqrt{S}} \sup_{\vert\det{W_l}\vert^{-1/2}\le D}\prod_{l=1}^L\vert\det W_l\vert^{-1/2}.
\end{align*}
\end{mythm}
\begin{proof}
By Theorem~\ref{thm:rademacher_unitary}, we have
 \begin{align*}
 &\hat{R}(\mcl{NN}_c,x_1,\ldots,x_S)
 =\frac{1}{S}\mr{E}\bigg[\sup_{\bg \in G^L,\ \vert\det{W_l}\vert^{-1/2}\le D} \sum_{s=1}^S NN_c(\bg, x_s) \sigma_s\bigg ]\\
 &\qquad=\frac{1}{S}\mr{E}\bigg[\sup_{\bg \in G^L,\ \vert\det{W_l}\vert^{-1/2}\le D} \sum_{s=1}^S F_c(\bg, x_s)\vert \det W_1\vert^{-1/2}\cdots \vert\det{W_L}\vert^{-1/2} \sigma_s\bigg ]\\
 &\qquad\le \frac{\red{E(c)}\Vert A_1\Vert\cdots \Vert A_{L-1}\Vert\Vert v\Vert}{\sqrt{S}} \sup_{\vert\det{W_l}\vert^{-1/2}\le D}\vert\det W_1\vert^{-1/2}\cdots \vert\det{W_L}\vert^{-1/2}.
 \end{align*}
\end{proof}

\begin{mythm}[Theorem~\ref{thm:rademacher_injective}]
Let $\mcl{F}_c=\{F_c(\theta_1,\ldots,\theta_L,\cdot)\,\mid\,\vert\det{W_1^*W_1}\vert^{-1/4},\ldots,\vert\det{W_L^*W_L}\vert^{-1/4}\le D\}$.
Assume $p_{c,x}\in\mcl{K}$ for $x\in \mcl{X}_0$.
Let $f_l=v\circ W_L\circ \sigma_{L-1}\circ \cdots \circ W_{l+1}\circ \sigma_l$.
Let $\alpha(h)=({\int_{W_l\mcl{X}_{l-1}} |h(x)|^2 d\mu_{\mcl{R}(W_l)}(x)}/{\int_{\tilde{\mcl{X}}_l}\vert h(x)\vert^2\mr{d}\mu_{\mathbb{R}^{d_l}}(x)})^{1/2}$ for $h\in\tilde{\hil}_l$.
Then, we have
\begin{align*}
 \hat{R}(\mcl{F}_c,x_1,\ldots,x_S)\le \sup_{\vert\det{W_l^*W_l}\vert^{-1/4}\le D}\frac{\red{E(c)}\Vert v\Vert\prod_{l=1}^{L-1}\Vert A_l\Vert \alpha(f_l)}{\sqrt{S}\prod_{l=1}^L\vert\det W_l^*W_l\vert^{1/4}},
\end{align*}
\end{mythm}
\begin{proof}
In the same way as Theorem~\ref{thm:rademacher_unitary}, we have the same inequality~\eqref{eq:bound_basic} but $\rho(g_l)$ is replaced by $\eta_l(\theta_l)=K_{W_l}$.
For $h\in\tilde{\hil}_{l}$, we have 
\begin{align}
&\Vert K_{W_l} h\Vert^2 = \int_{\mcl{X}_{l-1}} |h(W_lx)|^2 \mr{d}\mu_{\mathbb{R}^{d_{l-1}}}(x) = \int_{W_l\mcl{X}_{l-1}} |h(x)|^2 \frac{1}{\vert\det W_l^* W_l\vert^{1/2}} \mr{d}\mu_{\mcl{R}(W_l)}(x)\nn\\
&\quad = \frac{1}{\vert\det W_l^* W_l\vert^{1/2}} \frac{\int_{W_l\mcl{X}_{l-1}} |h(x)|^2 d\mu_{\mcl{R}(W_l)}(x)}{\int_{\tilde{\mcl{X}}_l}\vert h(x)\vert^2\mr{d}\mu_{\mathbb{R}^{d_l}}(x)}\int_{\tilde{\mcl{X}}_l}\vert h(x)\vert^2\mr{d}\mu_{\mathbb{R}^{d_l}}(x) = \frac{\alpha(h)^2 \Vert h\Vert^2 }{\vert\det W_l^* W_l\vert^{1/2}}\label{eq:bound_koopman_injective}
\end{align} 
Applying the inequality~\eqref{eq:bound_koopman_injective} to the inequality~\eqref{eq:bound_basic} for this case, we obtain the result.
\end{proof}

\begin{mythm}[Theorem~\ref{thm:bound_general}]
Let $\mcl{F}_c=\{F_c(\theta_1,\ldots,\theta_L,\cdot)\,\mid\,\vert\det{W_1\vert_{\opn{ker}(W_1)^{\perp}}}\vert^{-1/2},\ldots,\vert\det{W_L\vert_{\opn{ker}(W_L)^{\perp}}}\vert^{-1/2}\le D\}$.
Assume $p_{c,x}\in\mcl{K}$ for $x\in \mcl{X}_0$.
Then, we have
\begin{align*}
 \hat{R}(\mcl{F}_c,x_1,\ldots,x_S)&\le \sup_{\vert\det{W_l}\vert_{\opn{ker}(W_l)^{\perp}}\vert^{-1/2}\le D}\frac{\red{E(c)}\Vert v\Vert\prod_{l=1}^{L-1}\Vert A_l\Vert  \alpha(f_l)\prod_{l=1}^L\mu_{\opn{ker}(W_l)}(\mcl{Y}_{l-1})}{\sqrt{S}\prod_{l=1}^L\vert\det W_l\vert_{\opn{ker}(W_l)^{\perp}}\vert^{1/2}}.
 \end{align*}
\end{mythm}
\begin{proof}
For $h\in\tilde{\hil}_{l}$, we have 
\begin{align}
\Vert \tilde{K}_{\psi_l,W_l} h\Vert^2 &= \int_{\mcl{X}_{l-1}} |h(W_lx)\psi_l(x)|^2 \mr{d}x = \int_{\mcl{Z}_{l-1}} |h(W_lx)|^2 \mr{d}x\int_{\mcl{Y}_{l-1}}\vert \psi_l(x)\vert^2 \mr{d}x\nn\\
&= \int_{W_l\mcl{X}_{l-1}} |h(x)|^2 \frac{1}{\vert\det W_l\vert_{\opn{ker}(W_l)^{\perp}}\vert} \mr{d}\mu_{\mcl{R}(W_l)}(x) \cdot\mu_{\opn{ker}(W_l)}(\mcl{Y}_{l-1} )\nn\\
& \le \frac{\int_{W_l\mcl{X}_{l-1}} |h(x)|^2\mr{d}\mu_{\mcl{R}(W_l)}(x)}{\vert\det W_l\vert_{\opn{ker}(W_l)^{\perp}}\vert\int_{\tilde{\mcl{X}}_{l}} |h(x)|^2 \mr{d}\mu_{\mathbb{R}^{d_l}}(x)}\int_{\tilde{\mcl{X}}_{l}} |h(x)|^2 \mr{d}\mu_{\mathbb{R}^{d_l}}(x)\cdot\mu_{\opn{ker}(W_l)}(\mcl{Y}_{l-1})\nn\\ 
&= \frac{\Vert h\Vert^2\alpha(h)^2\mu_{\opn{ker}(W_l)}(\mcl{Y}_{l-1})}{\vert\det W_l\vert_{\opn{ker}(W_l)^{\perp}}\vert}. \label{eq:bound_koopman_general}
\end{align} 
Applying the inequality~\eqref{eq:bound_koopman_general} to the inequality~\eqref{eq:bound_basic} for this case, we obtain the result.
\end{proof}

\begin{mythm}[Proposition~\ref{prop:bound_cnn}]
Let $\mcl{F}_c=\{F_c(\theta_1,\ldots,\theta_L,\cdot)\,\mid\,\vert\beta(\theta_1)\vert^{-1/2},\ldots,\vert\beta(\theta_L)\vert^{-1/2}\le D\}$.
Assume $p_{c,x}\in\mcl{K}$ for $x\in \mcl{X}_0$.
Then, we have
\begin{align*}
 \hat{R}(\mcl{F}_c,x_1,\ldots,x_S)&\le \sup_{\vert\beta(\theta_l)\vert^{-1/2}\le D}\frac{\red{E(c)}\Vert v\Vert\prod_{l=1}^{L-1}\Vert A_l\Vert \mu_{\opn{ker}(P_l)}(\hat{\mcl{Y}}_{l})}{\sqrt{S}\prod_{l=1}^L\vert \beta_l(\theta_l) \vert^{1/2}}.
\end{align*}
\end{mythm}
\begin{proof}
Since the convolution is a linear operator whose eigenvalues are Fourier components, we have $\Vert \eta_l(\theta_l)\Vert\le \vert\beta_l(\theta_l)\vert^{-1/2}$.
In the same way as the proof of Theorem~\ref{thm:bound_general}, we have 
\begin{align*}
\Vert \tilde{K}_{\psi_l,P_l}\Vert \le \frac{\mu_{\opn{ker}(P_l)}(\hat{\mcl{Y}}_l)}{\vert\det P_l\vert_{\opn{ker}(P_l)^{\perp}}\vert^{1/2}}
=\mu_{\opn{ker}(P_l)}(\hat{\mcl{Y}}_l),
\end{align*}
which proves the result.
\end{proof}

\section{Experimental details}\label{ap:exp_details}
All the experiments were executed with Python 3.10 and TensorFlow 2.15.

\subsection{Validity of bounds}
We set $W_1$, $W_2$, and $w_3$ as learnable parameters.
We set the loss function as the mean squared error and the optimizer as the SGD with a learning rate $0.001$.
The learnable parameters are initialized with the orthogonal initialization.

\subsection{Comparison to existing bounds}
We constructed a network $f(x)=\sigma_4(W_4\sigma(W_3\sigma(W_2\sigma(W_1x+b_1)+b_2)+b_3)+b_4)$ with dense layers, where $W_1\in\mathbb{R}^{1024\times 784}$, $W_2\in\mathbb{R}^{2048\times 1024}$, $W_3\in\mathbb{R}^{2048\times 2048}$, $W_4\in\mathbb{R}^{10\times 2048}$, $b_1\in\mathbb{R}^{1024}$, $b_2\in\mathbb{R}^{2048}$, $b_3\in\mathbb{R}^{2048}$, $b_4\in\mathbb{R}^{10}$, $\sigma$ is the elementwise smooth Leaky ReLU~\citep{biswas22} with $\alpha=0.1$ and $\mu=0.5$, and $\sigma_4$ is the softmax.
The learnable parameters $W_1,\ldots,W_4$ are initialized by the orthogonal initialization for $l=1,2$ and by samples from the truncated normal distribution for $l=3,4$, and we used the Adam optimizer~\citep{kingma15} for the optimizer with a learning rate of $0.001$.
We set the loss function as the categorical cross-entropy loss.
The result in Figure~\ref{fig:mainfig} (b) is the averaged value $\pm$ the standard deviation in 3 independent runs.

\subsection{Validity for existing CNN models (LeNet)}
We constructed a 5-layered LeNet with the hyperbolic tangent activation functions and the averaged pooling layers.
We set the optimizer as the Adam optimizer with a learning rate of $0.001$.
The result in Figure~\ref{fig:mainfig} (c) is the averaged value $\pm$ the standard deviation in 3 independent runs.

\section{Comparison of the Koopman-based bounds to existing bounds}\label{ap:existing_bounds}
\begin{table}[t]
    \centering
    \caption{Comparison of our bound to existing bounds.}
    \begin{tabular}{c|c|c}
        \hline
         Authors & Rate & Type\\
         \hline
         \citet{neyshabur15}& $\frac{2^L\prod_{l=1}^L\Vert W_l\Vert_{2,2}}{\sqrt{S}}\rule{0pt}{15pt}$ & \multirow{7}{*}{Norm-based}\\
         \citet{neyshabur18}&$\frac{L\max_ld_l\prod_{l=1}^L\Vert W_l\Vert}{\sqrt{S}}\Big(\sum_{l=1}^L\frac{\Vert W_l\Vert_{2,2}^2}{\Vert W_l\Vert^2}\Big)^{1/2}$ & \\
         \citet{golowich18} &$\Big(\prod_{l=1}^L\Vert W_l\Vert_{2,2}\Big)\min\bigg\{\frac1{S^{1/4}},\sqrt{\frac{L}{S}}\bigg\}$ & \\
         \citet{bartlett17}& $\frac{\prod_{l=1}^L\Vert W_l\Vert}{\sqrt{S}}\bigg(\sum_{l=1}^L\frac{\Vert W_l^T-A_l^T\Vert_{2,1}^{2/3}}{\Vert W_l\Vert^{2/3}}\bigg)^{3/2}$ & \\
         \citet{wei20}&$\frac{(\sum_{l=1}^L\kappa_l^{2/3}\min\{L^{1/2}\Vert W_l-A_l\Vert_{2,2},\Vert W_l-B_l\Vert_{1,1}\}^{2/3})^{3/2}}{\sqrt{S}}\rule{0pt}{16pt}$&\\
         \citet{ju22}&$\frac{\sum_{l=1}^L\theta_l\Vert W_l-A_l\Vert_{2,2}}{\sqrt{S}}\rule{0pt}{14pt}$& \\
         \citet{li21} & $\Vert \mathbf{x}\Vert\vert\prod_{l=1}^L\Vert W_{l}\Vert-1\vert+\gamma_{\mathbf{x}}+\sqrt{\frac{c_{\mcl{X}}}{S}}\rule{0pt}{12pt}$& \\
         \hline
         \citet{arora18}&$\hat{r}+\frac{L\max_i\Vert f(x_i)\Vert}{\hat{r}\sqrt{S}} \Big(\sum_{l=1}^L\frac{1}{\mu_l^2\mu_{l\rightarrow}^2}\Big)^{1/2}\rule{0pt}{17pt}$&\multirow{2}{*}{Compression}\\ 
         \citet{Suzuki20}& $\frac{\hat{r}}{\sqrt{S}}+\sqrt{\frac{L}{S}}\big(\sum_{l=1}^L\tilde{r}_l(\tilde{d}_{l-1}+\tilde{d}_l)\big)^{1/2}\rule{0pt}{15pt}$ & \\
         \hline
         \citet{hashimoto2024koopmanbased} & {$\frac{\Vert v\Vert_{H_L} }{\sqrt{S}}\prod_{l=1}^L\frac{ G_l\Vert K_{\sigma_l}\Vert_{H_l}\Vert W_l\Vert^{s_{l-1}}}{\opn{det}(W_l^*W_l)^{1/4}}\rule{0pt}{14pt}$} & \multirow{2}{*}{Koopman-based}\\
         Ours & {$\frac{\Vert v\Vert_{\mcl{L}_L} }{\sqrt{S}}\prod_{l=1}^L\frac{ G_l\Vert K_{\sigma_l}\Vert_{\mcl{L}_l}}{\opn{det}(W_l^*W_l)^{1/4}}\rule{0pt}{14pt}$} &\\ 
         \hline
    \end{tabular}
    \label{tab:existing_bound}
\end{table}

We show the summary of the existing bounds and the proposed bound in Table~\ref{tab:existing_bound}.
Here, $\kappa_l$ and $\theta_l$ are determined by the 
Jacobian and Hessian of the network $f$ with respect to the $j$th layer and $W_l$, respectively.
In addition, $\tilde{r}_l$ and $\tilde{d}_l$ are the rank and dimension of the $j$th weight matrices for the compressed network and $\Vert\cdot\Vert_{p,q}$ is the matrix $(p,q)$-norm.
We note that although the form of the existing Koopman-based bound and the proposed bound is similar, our bound is applicable to a wider range of deep models, and the factors $G_l$ and $\Vert K_{\sigma_l}\Vert$ are more easily evaluated.

\section{Notation table}
We provide a notation table~\ref{tab1} that summarizes important notation in the main text.
\begin{table}[t]
\caption{Notation table}\label{tab1}
\vspace{.3cm}
\renewcommand{\arraystretch}{1.1}
 \begin{tabularx}{\linewidth}{|c|X|}
\hline
$G$ & Locally compact group for parameters\\
$\Theta_l$ & Set of parameters for the $l$th layer\\
$L$ & Number of layers\\
$d_l$ & Width of the $l$th layer\\
$\hil$ & Hilbert space for models\\
$\rho$ & Unitary representation of $G$ on $\hil$\\
$K_{\sigma}$ & Koopman operator with respect to a function $\sigma$\\
$W_l$ & Weight matrix for the $l$th layer\\
$\sigma_l$ & Activation function for the $l$th layer\\
$A_l$ & Linear operator corresponding to the activation function for the $l$th layer\\
$f$ & Original deep model\\
$F_c$ & Regularized model with a parameter $c$\\
$\mcl{F}_c$ & Function class for models\\
$k$ & Positive definite kernel defined as $k((g_1,\ldots,g_L),(\tilde{g}_1,\ldots,\tilde{g}_L))=\bracket{f(g_1,\ldots,g_L),f(\tilde{g}_1,\ldots,\tilde{g}_L)}_{\hil}$\\
$\phi$ & Feature map defined as $\phi(\mathbf{g})=k(\cdot,\mathbf{g})$\\
$\tilde{\phi}$ & Feature map representing models defined as $\tilde{\phi}(\mathbf{g})=f(g_1,\ldots,g_L)$, where $\mathbf{g}=(g_1,\ldots,g_L)$\\
$\mcl{K}$ & Hilbert space defined as the closure of $\{\sum_{i=1}^nc_i\tilde{\phi}(\mathbf{g}_i)\,\mid\, n\in\mathbb{N},\mathbf{g}_i\in G^L, c_i\in\mathbb{C}\}$\\
$\iota$ & Isomorphism that maps $\tilde{\phi}(\mathbf{g})$ to $\phi(\mathbf{g})$\\
\hline
 \end{tabularx}
\renewcommand{\arraystretch}{1} 
\end{table}

\end{document}